\documentclass[10pt,journal,letterpaper,twoside]{IEEEtran}

\usepackage[T1]{fontenc}
\usepackage{newtxtext}
\usepackage{amsmath,amssymb,mathtools}
\usepackage{amsthm}
\usepackage{newtxmath}
\usepackage{booktabs,multirow,array}
\usepackage[table]{xcolor}
\usepackage{colortbl}
\usepackage{algorithm}
\usepackage[noend]{algpseudocode}
\usepackage{enumitem}
\usepackage{xspace}
\usepackage{microtype}
\usepackage{graphicx}
\usepackage{pifont}
\usepackage{listings}
\usepackage{url}
\usepackage{cite}
\usepackage{placeins}
\usepackage[most]{tcolorbox}
\usepackage[hidelinks]{hyperref}

\graphicspath{{./}{fig/}{logos/}}
\makeatletter
\def\input@path{{./}}
\makeatother

\definecolor{Ink}{HTML}{20262E}
\definecolor{Navy}{HTML}{274C77}
\definecolor{Teal}{HTML}{2A8178}
\definecolor{Orange}{HTML}{C66B2B}
\definecolor{Brick}{HTML}{A64B3C}
\definecolor{Line}{HTML}{C8D2D9}
\definecolor{PaleBlue}{HTML}{F0F5F8}
\definecolor{PaleTeal}{HTML}{F1F8F5}
\definecolor{PaleGray}{HTML}{F5F6F7}
\definecolor{PaleOrange}{HTML}{FFF5ED}
\definecolor{PreprintBlue}{HTML}{28658A}
\definecolor{PreprintBlueDark}{HTML}{1F4F6C}
\definecolor{PreprintGreen}{HTML}{2C805B}
\definecolor{PreprintGreenDark}{HTML}{226548}
\definecolor{PreprintOrange}{HTML}{C5672D}
\definecolor{PreprintRule}{HTML}{626A70}
\definecolor{SkillZipBlue}{RGB}{35,79,119}
\definecolor{SkillZipGreen}{RGB}{45,105,78}
\definecolor{SkillZipPaleBlue}{RGB}{232,240,247}
\definecolor{SkillZipPaleGray}{RGB}{245,246,247}

\newtcolorbox{abstractbox}{
  enhanced,
  breakable,
  colback=PaleBlue,
  colframe=PreprintBlue,
  boxrule=0.55pt,
  arc=2pt,
  left=5pt,
  right=5pt,
  top=4pt,
  bottom=4pt,
  before skip=0pt,
  after skip=5pt,
  fontupper=\small
}

\newtcolorbox{projectbox}{
  enhanced,
  breakable,
  colback=PaleTeal,
  colframe=PreprintGreen,
  boxrule=0.55pt,
  arc=2pt,
  left=5pt,
  right=5pt,
  top=3pt,
  bottom=3pt,
  before skip=0pt,
  after skip=7pt,
  fontupper=\small
}

\newtcolorbox{glancebox}{
  enhanced,
  breakable,
  title={SkillZip at a glance},
  colback=PaleOrange,
  colframe=PreprintOrange,
  colbacktitle=PreprintOrange,
  coltitle=white,
  fonttitle=\bfseries\sffamily\footnotesize,
  fontupper=\small,
  boxrule=0.55pt,
  arc=2pt,
  left=5pt,
  right=5pt,
  top=3pt,
  bottom=3pt,
  toptitle=2pt,
  bottomtitle=2pt,
  before skip=5pt,
  after skip=7pt
}

\newtcolorbox{takeawaybox}{
  enhanced,
  title={Takeaway},
  colback=PaleTeal,
  colframe=PreprintGreen,
  colbacktitle=PreprintGreen,
  coltitle=white,
  fonttitle=\bfseries\sffamily\footnotesize,
  fontupper=\small,
  boxrule=0.55pt,
  arc=2pt,
  left=5pt,
  right=5pt,
  top=3pt,
  bottom=3pt,
  toptitle=2pt,
  bottomtitle=2pt,
  before skip=5pt,
  after skip=6pt
}

\newtcolorbox{roadmapbox}{
  enhanced,
  breakable,
  title={Appendix roadmap},
  colback=PaleBlue,
  colframe=PreprintBlue,
  colbacktitle=PreprintBlue,
  coltitle=white,
  fonttitle=\bfseries\sffamily\small,
  fontupper=\small,
  boxrule=0.55pt,
  arc=2pt,
  left=6pt,
  right=6pt,
  top=4pt,
  bottom=4pt,
  toptitle=2pt,
  bottomtitle=2pt,
  before skip=4pt,
  after skip=8pt
}

\newtcolorbox{promptbox}[1][]{
  enhanced,
  breakable,
  title={#1},
  colback=PaleBlue,
  colframe=PreprintBlue,
  colbacktitle=PreprintBlue,
  coltitle=white,
  fonttitle=\bfseries\sffamily\footnotesize,
  fontupper=\ttfamily\footnotesize,
  boxrule=0.5pt,
  arc=2pt,
  left=6pt,
  right=6pt,
  top=4pt,
  bottom=4pt,
  toptitle=2pt,
  bottomtitle=2pt,
  before skip=5pt,
  after skip=6pt,
  before upper={\setlength{\parskip}{2pt}}
}

\newcommand{\method}{\textsc{SkillZip}\xspace}

\newcommand{\atoms}{\mathcal{A}}
\newcommand{\contract}{\mathcal{C}}
\newcommand{\library}{\mathcal{K}}
\newcommand{\residual}{\mathcal{R}}

\newcommand{\Description}[1]{}
\newcommand{\resulttakeaway}[1]{%
  \par\smallskip
  \noindent\textcolor{SkillZipBlue}{\textbf{Key result.}}~\textbf{#1}%
  \par\smallskip
}

\theoremstyle{definition}
\newtheorem{definition}{Definition}[section]

\theoremstyle{plain}
\newtheorem{proposition}[definition]{Proposition}
\newtheorem{corollary}[definition]{Corollary}

\lstdefinestyle{compactjson}{
  basicstyle=\ttfamily\scriptsize,
  columns=fullflexible,
  breaklines=true,
  frame=single,
  rulecolor=\color{Line},
  backgroundcolor=\color{PaleGray},
  xleftmargin=2pt,
  xrightmargin=2pt,
  aboveskip=5pt,
  belowskip=5pt,
  showstringspaces=false
}

\setlist[itemize]{leftmargin=*,topsep=2pt,itemsep=1pt,parsep=0pt,partopsep=0pt}
\setlist[enumerate]{leftmargin=*,topsep=2pt,itemsep=1pt,parsep=0pt,partopsep=0pt}
\newcommand{\AffiliationLogoStrip}{%
  \parbox{\textwidth}{%
    \normalfont\normalsize
    \raggedright
    \raisebox{0.010in}[0.33in][0pt]{%
      \includegraphics[height=0.255in,keepaspectratio]{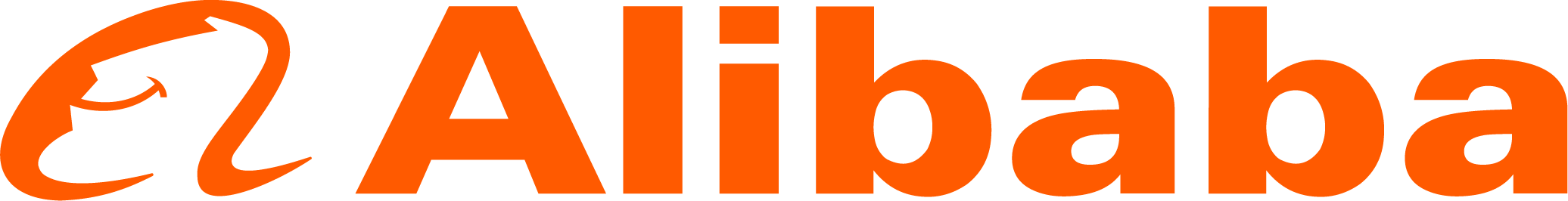}}%
    \hspace{0.30in}%
    \raisebox{-0.010in}[0.33in][0pt]{%
      \includegraphics[height=0.315in,keepaspectratio]{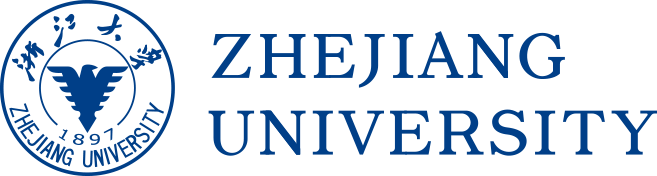}}%
    \hspace{0.30in}%
    \raisebox{0.005in}[0.33in][0pt]{%
      \includegraphics[height=0.285in,keepaspectratio]{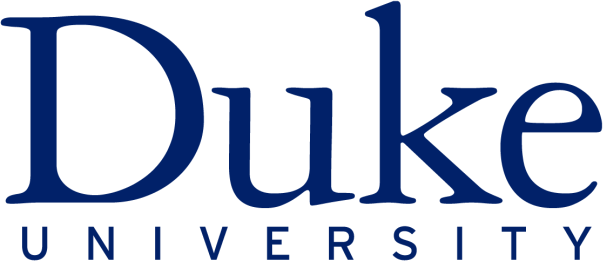}}%
    \par\vspace{0.105in}%
    \noindent\color{PreprintRule}\rule{\textwidth}{0.55pt}%
    \par\vspace{0.235in}%
  }%
}

\hypersetup{
  pdftitle={SkillZip: Evaluation-Free Skill Compression for Self-Evolving Agents by Discovering Reusable Structure},
  pdfauthor={Xiaofan Bai, Hongqiang Lin, Chao Liu, Yantao Zhang, Xuan Jin, Xipeng Cao, Yuhong Li},
  pdfsubject={Evaluation-free compression of self-evolving agent skills},
  pdfkeywords={LLM agents, self-evolving agents, agent skills, skill compression, minimum description length}
}

\title{%
  \AffiliationLogoStrip\par
  {\normalfont\fontsize{20.5}{24.0}\selectfont
  SkillZip: Evaluation-Free Skill Compression for\\[-0.12em]
  Self-Evolving Agents by Discovering Reusable Structure}
}

\author{%
  {\normalsize Xiaofan Bai$^{1,*}$, Hongqiang Lin$^{2,*}$, Chao Liu$^{1}$, Yantao Zhang$^{3}$,\\[-0.15em]
  Xuan Jin$^{1,\dagger}$, Xipeng Cao$^{1}$, and Yuhong Li$^{1}$}\\[-0.12em]
  {\footnotesize $^{1}$Alibaba Group \quad $^{2}$Zhejiang University \quad $^{3}$Duke University}\\[-0.14em]
  {\scriptsize\ttfamily
  baixiaofan.bxf@alibaba-inc.com, linhongqiang@zju.edu.cn
  }\\[-0.18em]
  
  {\scriptsize $^{*}$Equal contribution. \quad $^{\dagger}$Project leader.}%
}

\IEEEtitleabstractindextext{%
\begin{abstractbox}
\textbf{Abstract---}
Self-evolving agents accumulate reusable skills by appending successful procedures and failure fixes. Over time, the same requirement is often restated in several branches, examples, and warnings, while common action sequences are copied rather than reused. The resulting skill becomes expensive to inject and difficult to maintain. Generic prompt compression is ill-suited to this setting because a skill is not a flat passage: its name and description define when it applies, its workflow controls execution, its tool and output contracts constrain validity, and rare exceptions may remain essential even when no sampled task activates them. Evaluation-guided compression can test these behaviors, but it introduces rollouts, cost, and dependence on the compression-time evaluation set.

We present \method, an evaluation-free method that compresses a skill by finding its \emph{shortest faithful structural explanation}. The intuition is ``explain once, reference many'': state a repeated rule once at the scope where it applies, factor a repeated action sequence into a shared procedure, and keep only the differences as explicit exceptions. We formalize this intuition as a typed minimum-description-length objective over a skill contract and a residual, subject to a hard coverage constraint for every extracted trigger, workflow edge, tool requirement, obligation, and output field. The formulation provides simple sharing thresholds, preserves unique rare rules by construction, and supports efficient local updates. \method has a one-shot mode with one structured extraction call and deterministic optimization, and a continual \emph{Zip-on-Write} mode that integrates each self-evolution patch without replaying tasks or reparsing the full history. Through comprehensive experimental evaluations, we demonstrate the effectiveness and superiority of \method in compression performance, generalizability, and cost overhead.
\end{abstractbox}
\begin{projectbox}
\textbf{Project available at:}
\url{https://github.com/yutou520131/SkillZip}
\end{projectbox}
}

\begin{document}
\maketitle
\thispagestyle{empty}
\IEEEdisplaynontitleabstractindextext

\section{Introduction}
\label{sec:intro}

A self-evolving agent improves by turning experience into reusable instructions. When a tool fails, it appends a warning; when an answer violates a format, it adds an example; when a rare branch succeeds, it records the successful procedure. Each update is locally reasonable. The accumulated skill, however, is usually edited as an append-only notebook rather than maintained as a coherent program. After enough rounds, ``never overwrite the source file'' may appear in the introduction, three workflow branches, and an example, while the same validate--repair--verify sequence is copied repeatedly with only minor differences.

This creates a systematic mismatch between \emph{textual growth} and \emph{procedural growth}. New text keeps accumulating even after the number of genuinely new requirements begins to saturate. Because a loaded skill occupies the context on every invocation, redundant text increases prefill cost and can obscure the instructions that actually govern execution. Figure~\ref{fig:growth} illustrates the phenomenon that our longitudinal study will measure.

\begin{figure}[t]
  \centering
  \includegraphics[width=1.\columnwidth]{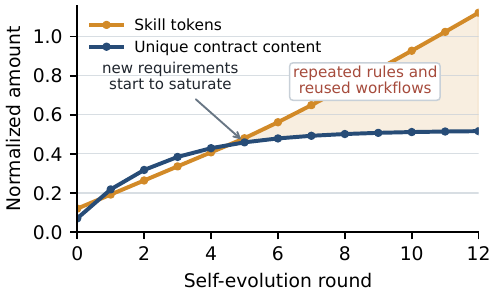}
  \caption{\textbf{The skill growth tendency with respect to self-evolving progress on diverse benchmarks for a CodeX Agent.} Self-evolution can keep increasing skill length after genuinely new procedural content has largely stabilized.}
  \Description{A line chart showing skill tokens continuing to grow while unique procedural content rises early and then saturates.}
  \label{fig:growth}
\end{figure}

Not all long skills are long for the same reason. A conventional one-shot or community-authored skill can mix operational rules with background exposition, templates, examples, or even content unrelated to execution; deleting or deferring such material is often appropriate~\cite{gao2026skillreducer}. A multi-round evolved skill is different. Recent systems accept bounded edits only after rollout feedback or validation, or aggregate recurrent successful and failed trajectories into updates~\cite{yang2026skillopt,ma2026skillclaw}. Its additional text is therefore usually more knowledge-dense: even a singleton warning may encode an expensive failure that the agent learned to avoid. The main redundancy is less often \emph{irrelevant topic} and more often \emph{repeated representation}---the same invariant copied into several branches, a workflow restated after each failure, or a general rule followed by progressively narrower exceptions. \textbf{This distinction changes the compression goal from content filtering to knowledge consolidating.}

The obvious response is prompt compression, but the abstraction is wrong. 
Prompt compressors typically decide which tokens are useful for a current query or likely answer~\cite{jiang2023llmlingua,jiang-etal-2024-longllmlingua,pan2024llmlingua2}, while skills should be reusable for all queries of certain tasks. 
More importantly, its meaning is not distributed uniformly across words. The name and description determine when the skill is selected; temporal phrases determine action order; tool arguments define valid calls; branch guards determine where a rule applies; and an output schema defines when the procedure is complete. A rare exception may be activated only once, yet deleting it can be more damaging than removing several paragraphs of rationale. Task-based validation is also not a complete solution. SkillReducer~\cite{gao2026skillreducer} demonstrates that structure-aware rewriting and progressive disclosure can reduce skill cost, and its feedback loop uses generated tasks to recover missed content. Such validation is valuable, but it makes compression expensive and couples the compressed results to compression-time tasks. A method can repeatedly repair the branches observed by that set while still losing an untested guard or output constraint. We therefore ask a stricter question: \emph{Can a skill be shortened using only the structure already present in the skill, without observing tasks, rewards, trajectories, or verifiers?}

Our answer begins with a simple observation: a skill resembles a compact operating manual more than a generic prompt. It contains an \textbf{interface} (name, purpose, triggers, exclusions), a \textbf{procedure} (steps, branches, loops, fallbacks), \textbf{contracts} for tools and outputs, and \textbf{rules} whose scope can be global or branch-specific. This structure reveals redundancy that token importance cannot see. A rule repeated in every branch can be stated once before the branch. A repeated action sequence can be named once and reused. Several guarded variants can be written as one common rule plus explicit exceptions. Conversely, a unique requirement cannot be removed merely because it is short or rare.

Motivated by this observation, the key compression principle of \method could be concluded as: \textbf{prefer the shortest faithful explanation of the skill}. Informally, the compressed representation pays once for every shared structure and pays separately only for genuine differences. This is the same intuition as replacing three copies of a code fragment by one function and three calls. Formally, it is an instance of minimum description length (MDL): minimize the cost of a compact skill contract plus the cost of the source details not explained by that contract, while requiring every extracted normative requirement to remain covered. We will further introduce the intuition before the formal objective in Section~\ref{sec:theory}.

To broaden the practical adaptability of \method, we design two modes.
\textbf{One-shot SkillZip} compresses an existing evolved skill checkpoint with one structured extraction call followed by deterministic optimization. \textbf{Zip-on-Write} participates in self-evolution: when a new self-evolution patch arrives, \method compares it only with compatible parts of the current compact contract, then either absorbs it, refines an existing rule, adds a new requirement, or triggers a local refactoring. Periodic repacking captures patterns that become reusable only after several updates.

\begin{glancebox}
\textbf{Compress once:} extract a typed contract, consolidate reusable rules and workflows, and render a shorter skill while preserving every covered requirement. \textbf{Maintain continually:} use Zip-on-Write to absorb each validated patch through local, scope-aware updates. \textbf{Evidence contract:} compression observes no benchmark tasks, rollouts, rewards, or behavioral verifier.
\end{glancebox}

Our main contributions are:
\begin{itemize}[leftmargin=*,topsep=2pt,itemsep=1pt]
  \item We formulate \textbf{evaluation-free skill compression} as contract representation tailored for evolved skills.
  \item We derive a \textbf{shortest faithful explanation objective} that unifies semantic sharing, scope lifting, workflow reuse, and exception encoding. A hard coverage constraint gives a rare-rule preservation guarantee independent of task frequency.
  \item We design \textbf{one-shot SkillZip} and \textbf{Zip-on-Write}. The first uses one structured extraction call followed by deterministic optimization; the second maintains a compact state during self-evolution and avoids replaying tasks or reparsing the full history.
  \item We perform comprehensive empirical evaluations showing that SkillZip delivers substantial gains in compression performance, robust generalizability, and low cost overhead.
\end{itemize}
\section{Related Work}
\label{sec:related}

\subsection{Self-Evolving Agents and Persistent Skills}
Agents increasingly convert interaction experience into reflections, memories, programs, or reusable skills~\cite{wang2023voyager,shinn2023reflexion,gao2025selfevolving}. Systems such as ACE, SkillRL, and SkillClaw explicitly maintain and evolve persistent skill artifacts across tasks or users~\cite{zhang2025ace,xia2026skillrl,ma2026skillclaw}; SkillRevise and SkillGrad use execution traces to diagnose and improve an existing skill~\cite{liu2026skillrevise,wang2026skillgrad}. These works focus on how procedural knowledge is acquired. As the artifact grows, however, acquisition and maintenance become different problems: a useful new patch can still duplicate an old invariant or repeat an existing workflow. \method addresses this consolidation problem without replaying the experiences that produced the skill.

\subsection{Prompt and Context Compression}
Prompt compression prunes or rewrites context using perplexity, query relevance, learned token labels, summarization, or latent representations~\cite{li2023selective,jiang2023llmlingua,jiang-etal-2024-longllmlingua,pan2024llmlingua2,mu2023gisting,xu2024recomp}. These techniques usually treat input as a sequence whose importance is estimated relative to a query, answer, or representative task distribution. Skills violate both assumptions: future tasks are unknown, and procedural meaning depends on typed relations such as ``before'', branch guards, tool arguments, and required output fields. Query-conditioned compression may be effective for a single request, but a reusable skill must retain requirements that no compression-time query activates. \textbf{\method therefore compresses repeated procedural structure rather than low-salience tokens.}

\subsection{Skill Compression and Efficient Execution}
SkillReducer is the closest textual baseline. It minimizes routing descriptions with delta debugging, classifies body content, moves supplementary material to on-demand references, and uses faithfulness checks and task-based feedback~\cite{gao2026skillreducer}. It establishes that skill content should not be treated as homogeneous text and is especially well matched to ordinary skills that contain background, examples, and reference material. Multi-round evolved skills present a complementary regime: accepted updates are often operationally relevant, while redundancy appears as repeated constraints, overlapping workflows, and accumulated exceptions. In addition, SkillReducer's generated tasks and feedback loop participate directly in candidate selection. Repeatedly tuning an artifact on a finite evaluation sample can create sample-specific selection effects, a general issue in adaptive data analysis~\cite{dwork2015adaptive}. \method removes this source of dependence by never exposing compression to tasks, and it keeps a single portable text skill rather than changing the loading architecture.

SKIM and TokMem encode procedural knowledge into learned soft or memory tokens~\cite{wang2026skim,wu2026tokmem}, while Skill-to-LoRA transfers a textual skill into model parameters~\cite{zhang2026skilltolora}. Their representations can be compact, but they are model-dependent and less convenient to inspect, diff, or update during continual evolution. \method uses a structured sidecar only while maintaining the artifact and renders an ordinary human-readable skill for deployment. Skill~\cite{zhang2026formalskill} and SkillRT~\cite{chen2026skillrt} treat skills as executable or compilable artifacts. Their goals are reliable execution and portability rather than shortening an accumulated natural-language skill. Nevertheless, they support the premise underlying our representation: \emph{a skill has an interface, control flow, and contracts, not merely topical text}.

\subsection{Minimum Description Length}
The minimum description length (MDL) principle selects the model that gives the shortest joint description of a model and the data it explains~\cite{grunwald2007mdl,galbrun2022mdl}. Grammar-based methods such as SEQUITUR and Re-Pair similarly replace repeated subsequences with reusable rules~\cite{nevill1997sequitur,larsson2000repair}. We adopt such an intuitive principle---\emph{a shared explanation is useful when defining it once and referencing it is cheaper than repeating it}---and adapt it to typed procedural knowledge. Unlike sequence compression, our objective may not merge identical-looking clauses if they occur under incompatible guards, and it may not delete a unique rule even when that rule has zero repetition. The hard coverage is therefore as important as the skill length.

\section{From Skill Text to a Compact Contract}
\label{sec:structure}

\subsection{Evaluation-Free Compression}
Let $S$ be one textual skill and $\widetilde S$ its compressed form. During compression, a method may read $S$, files referenced by $S$, and---in the continual setting---the sequence of skill patches. It may not access downstream tasks, execution trajectories, rewards, or behavioral verifiers. These resources are used only after compression to measure generalization. We seek three properties. First, $\widetilde S$ should be substantially shorter under the deployment tokenizer. Second, it should preserve what the skill \emph{requires}, including rare conditions. Third, it should remain an ordinary text artifact that can be inspected, versioned, and used by different agent backbones.

The second requirement is deliberately structural. Without running tasks, we cannot prove that arbitrary natural language induces identical model behavior. We can, however, require the compressor to preserve every operational element it extracts from the source, and to retain ambiguous source spans verbatim. This makes the boundary of the guarantee explicit rather than hiding it behind a finite evaluation suite.

\subsection{The Contract Hidden Inside a Skill}
Figure~\ref{fig:anatomy} shows the representation used by \method. We write the extracted contract as
\begin{equation}
\contract(S)=\langle I,G,T,C,O,E\rangle,
\label{eq:skill_tuple}
\end{equation}
where:
\begin{itemize}[leftmargin=*,topsep=2pt,itemsep=1pt]
  \item $I$ is the \textbf{interface}: skill name, purpose, positive triggers, and exclusions;
  \item $G$ is the \textbf{workflow}: actions, order, decisions, loops, fallbacks, and stop conditions;
  \item $T$ is the \textbf{tool protocol}: tool names, required arguments, preconditions, expected observations, and error handling;
  \item $C$ is the set of \textbf{scoped rules}: what the agent must, must not, or preferably should do, together with the scope and guard under which each rule applies;
  \item $O$ is the \textbf{output contract}: response type, required fields, ordering, validation, and completion conditions; and
  \item $E$ is \textbf{supporting evidence}: examples, templates, and rationale linked to the contract elements they express.
\end{itemize}

\begin{figure}[t]
  \centering
  \includegraphics[width=0.85\columnwidth]{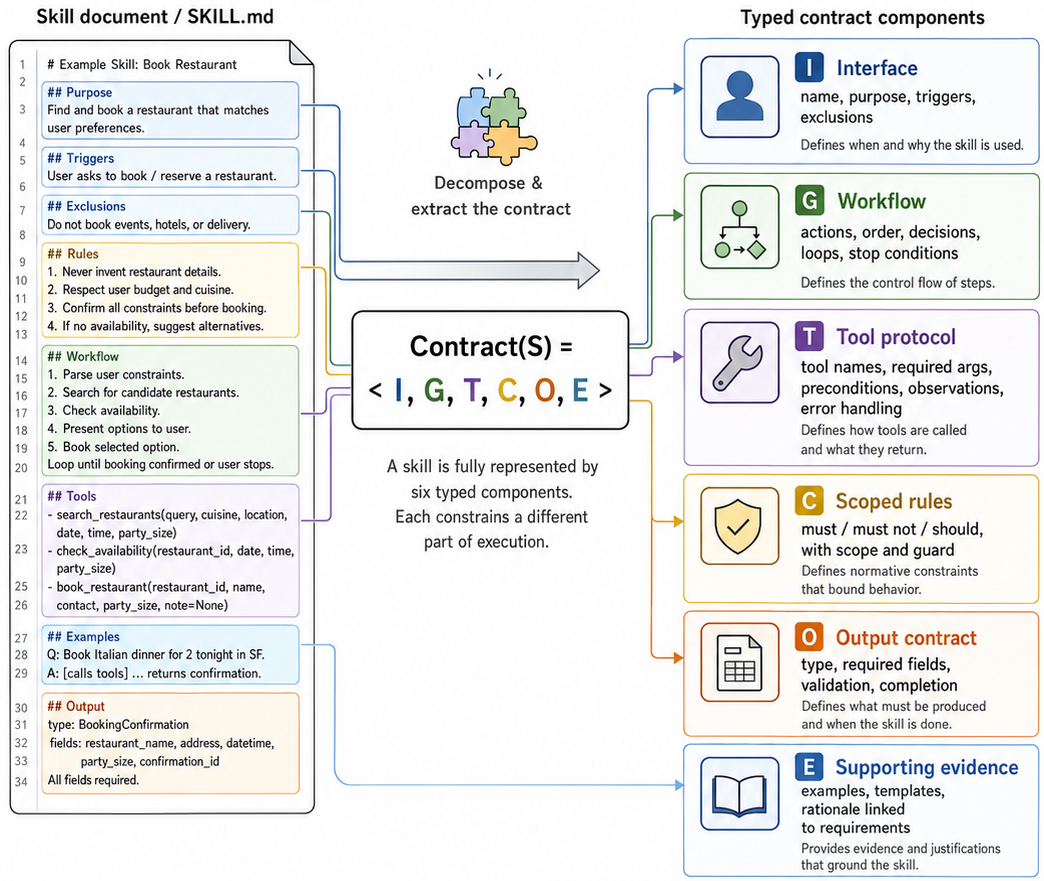}
  \caption{\textbf{A skill is a typed contract.} Different text spans constrain different parts of execution. An example is removable only when every requirement it uniquely expresses is represented elsewhere.}
  \Description{A skill document is mapped to an interface, control-flow graph, tool contracts, scoped obligations, output schema, and residual evidence.}
  \label{fig:anatomy}
\end{figure}

This decomposition is not cosmetic. It changes which compressions are safe. Two sentences about the same tool cannot merge if they require different arguments. A prohibition repeated in all branches may move to the common parent scope, but a prohibition attached to only one guarded branch may not. Two examples can be removed if they merely illustrate an explicit output schema; an example that is the sole source of a required field must first be converted into an explicit output rule.

\subsection{Typed Units, Scope, and Coverage}
The parser maps source spans to typed units $a\in\atoms$. A unit records
\begin{equation}
a=(\tau,\sigma,g,m,p,\mathcal{P}),
\end{equation}
where $\tau$ is its type, $\sigma$ its scope, $g$ an optional guard, $m$ its modality, $p$ its normalized content, and $\mathcal{P}$ the supporting source spans. A workflow unit additionally records incoming and outgoing edges; a tool unit records its argument signature; an output unit records field and validation information.

If a source span cannot be interpreted with sufficient confidence, it becomes a \emph{locked residual}: it is copied verbatim and excluded from deletion. This conservative fallback is important in practice because the structural parser, rather than the later optimizer, is the main source of semantic uncertainty.

We write $a\preceq K$ when compact representation $K$ covers unit $a$. Coverage can be direct, or structural: a branch-local copy can be covered by the same rule placed at the closest common ancestor; a repeated action sequence can be covered by a shared procedure whose expansion contains the original edges. Coverage is type-sensitive. A tool name does not cover its required arguments, and a general rule does not cover a conflicting guarded exception.

\begin{definition}[Contract coverage]
A compact representation $K$ covers a parsed skill if
\begin{equation}
\forall a\in\atoms_{\mathrm{req}}(S),\qquad a\preceq K,
\label{eq:coverage}
\end{equation}
where $\atoms_{\mathrm{req}}$ contains interface conditions, workflow nodes and edges, tool requirements, scoped rules, and output requirements.
\end{definition}

The next section asks which covering representation should be selected. The answer is not to merge everything similar, but to choose the representation that is shortest after accounting for definitions, references, exceptions, and unexplained residual text.

\section{Theoretical Analysis}
\label{sec:theory}

Section~\ref{sec:structure} defines the \emph{contract} that must survive
compression. The remaining question is how to represent that contract
compactly. Our intuition is the same as refactoring repeated code:
\emph{state shared structure once, reference it where needed, and keep only
genuine differences explicit}. A shared abstraction is useful only when this
representation is shorter than repeating all of its instances.

\subsection{The Shortest Faithful Explanation}

We represent a compressed skill by a library $\library$ of reusable contract
elements and a residual $\residual$ for unique, exceptional, or uncertain
content. The library may contain shared rules, workflow fragments, tool
contracts, output fields, and interface entries. The residual preserves
explicit exceptions that cannot be safely normalized.

SkillZip selects the shortest representation that still covers every required
contract unit:
\begin{equation}
\begin{aligned}
(\library^*,\residual^*)
&=
\arg\min_{(\library,\residual)\in\mathcal{H}(S)}
\Big[
L(\library)+L(\residual\mid\library)
\Big] \\
\text{s.t.}\quad
&a\preceq(\library,\residual),
\qquad
\forall a\in\atoms_{\mathrm{req}}(S).
\end{aligned}
\label{eq:objective}
\end{equation}
Here, $\mathcal{H}(S)$ is the finite set of candidate representations proposed
from the parsed skill, and $L(\cdot)$ measures the rendered token cost,
including the overhead of definitions, references, and scope annotations.
Equation~\eqref{eq:objective} is an instance of minimum description length,
but its operational meaning is simple: \textbf{compression may change how a
requirement is written, but not whether it remains represented}. In
particular, a unique requirement cannot be removed merely as it is short or unsupported by frequently sampled tasks.

\subsection{One Objective, Four Skill-Specific Decisions}

The same objective governs four forms of reusable structure.

\begin{itemize}
    \item \textbf{Equivalent requirements:} paraphrases are represented once
    when a shared rule plus any residual differences is shorter than keeping
    separate copies.

    \item \textbf{Repeated rules across scopes:} a rule is moved to the nearest
    common scope only when it applies to every relevant path; conflicting
    branch-local behavior remains as an explicit exception.

    \item \textbf{Repeated workflows:} a recurring action sequence becomes a
    shared procedure only when the saved repetitions exceed the cost of
    defining and calling it.

    \item \textbf{Guarded variants:} related clauses may be represented as one
    common rule plus guarded deltas, but only when the exception structure is
    both faithful and shorter than listing the variants separately.
\end{itemize}

These decisions are therefore not independent rewrite heuristics. They are
alternative ways of covering the same typed contract, compared under one
length objective. The explicit cost tests and scope conditions are given in
Appendix~\ref{app:theory-details}.

\subsection{Preservation and Continual Compression}

\begin{proposition}[Parsed-contract preservation]
\label{prop:coverage}
If every normative source span is represented by a typed unit or residual, any feasible solution of Eq.~\eqref{eq:objective} preserves
all extracted requirements.
\end{proposition}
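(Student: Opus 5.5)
The argument is essentially definitional: preservation follows from the feasibility constraint in Eq.~\eqref{eq:objective} and the definition of contract coverage, Eq.~\eqref{eq:coverage}. The only real work is pinning down what ``extracted requirements'' means and checking that no normative content escapes the set over which the constraint quantifies.

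The plan has three steps.

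\emph{Step 1: identify the extracted requirements.} Under the hypothesis, every normative source span $s$ is mapped either to at least one typed unit $a\in\atoms$ or to a locked residual. Typed units of normative type (interface conditions, workflow nodes and edges, tool requirements, scoped rules, output requirements) are exactly the members of $\atoms_{\mathrm{req}}(S)$. I will therefore define the extracted requirements as the union of $\atoms_{\mathrm{req}}(S)$ and the set of locked residual spans. I also take locked residuals to be part of every candidate in $\mathcal{H}(S)$, since they are ``copied verbatim and excluded from deletion.''

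\emph{Step 2: apply feasibility to typed units.} Fix any feasible $(\library,\residual)$. The constraint gives $a\preceq(\library,\residual)$ for every $a\in\atoms_{\mathrm{req}}(S)$. By Definition~\ref{eq:coverage}, this means $(\library,\residual)$ covers the parsed skill. Coverage is type-sensitive: a tool name does not cover its arguments, and a general rule does not cover a conflicting guarded exception. Hence each requirement is represented with its own type, scope, guard, and modality, not merely by some topically similar element. Direct coverage and the two structural forms (lifting to the closest common ancestor, expansion of a shared procedure) all count as representation by definition of $\preceq$.

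\emph{Step 3: apply feasibility to locked residuals.} Every candidate in $\mathcal{H}(S)$ retains the locked residual spans verbatim, so any feasible solution contains them unchanged.

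Combining Steps 2 and 3, every extracted requirement is preserved. This holds for every feasible point, not only for the minimizer $(\library^*,\residual^*)$. In particular it holds independently of the length term $L$, so a short or rare requirement cannot be traded away for a shorter encoding.

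The main obstacle is Step 1's modelling assumption that locked residuals are non-deletable in every candidate of $\mathcal{H}(S)$. The objective itself only constrains $\atoms_{\mathrm{req}}$. If this is not built into the candidate generator, I would instead promote each locked span to a pseudo-unit of an opaque type in $\atoms_{\mathrm{req}}$ whose only cover is its verbatim copy, and then reuse Step 2. I will also state explicitly the limit of the guarantee: it concerns extracted units, and parser errors outside the hypothesis are not covered.
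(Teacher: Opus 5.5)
Your proposal is correct and follows the paper's argument. Preservation comes straight from the hard coverage constraint in Eq.~\eqref{eq:objective}, with direct, scope-lifted, shared-procedure and verbatim-residual coverage all counting as representation, so no feasible point can drop the only representation of a required unit. The one small difference is that the paper counts verbatim retention in $\residual$ as a mode of $\preceq$ rather than assuming anything about $\mathcal{H}(S)$, which is essentially your fallback of treating locked spans as units covered only by their verbatim copy.
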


Prop.~\ref{prop:coverage} follows directly from the hard coverage constraint. Its most
important consequence is independent of the length model:

\begin{corollary}[Rare-rule preservation]
\label{cor:rare}
The preservation of a unique requirement does not depend on how often its
branch appears in any compression-time task distribution.
\end{corollary}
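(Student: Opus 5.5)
The corollary follows from Prop.~\ref{prop:coverage} once we notice that nothing in Eq.~\eqref{eq:objective} mentions tasks. My plan is to make this precise by showing that both the feasible set and the set of requirements that must be covered are fixed entirely by the source skill $S$. I will read ``unique requirement'' as a unit $a\in\atoms_{\mathrm{req}}(S)$ that has no paraphrase or duplicate elsewhere in $S$, and ``depends on a task distribution $\mathcal{D}$'' as the possibility that whether $a\preceq(\library^*,\residual^*)$ holds could change as $\mathcal{D}$ varies.

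The first step lists the inputs to the optimization. By the evaluation-free setting, compression may read only $S$, the files $S$ references, and, in the continual setting, the sequence of patches. The parser therefore builds $\atoms$, $\atoms_{\mathrm{req}}(S)$ and the locked residuals from text alone. The candidate set $\mathcal{H}(S)$ is generated from the parsed skill, and $L(\cdot)$ counts rendered tokens. So both the feasible region and the objective are functions of $S$, and neither has $\mathcal{D}$ as an argument. In particular, the branch frequencies that $\mathcal{D}$ would induce never enter $L$ or $\preceq$.

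The second step applies Prop.~\ref{prop:coverage}. Under its hypothesis that every normative span becomes a typed unit or a locked residual, every feasible $(\library,\residual)$, and hence the minimizer, satisfies $a\preceq(\library,\residual)$. This holds in particular for the unique unit $a$. Since $a$ has no duplicate, none of the sharing rewrites (scope lifting, shared procedures, guarded deltas) can absorb it into another unit; it is covered directly by its own library entry or by residual text. The conclusion ``$a$ is covered'' holds for every feasible point, not just the minimizer, so it does not depend on how $\arg\min$ breaks ties or on the length model. That establishes independence from $\mathcal{D}$, because for any two distributions the same statement holds.

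The main obstacle is the hypothesis itself. Whether a rare span is correctly extracted into $\atoms_{\mathrm{req}}$ is a property of the parser, and I need to exclude the possibility that a parser failure on rare content reintroduces a dependence on frequency. I will handle this by noting that the parser is also task-free, since it is a single extraction call on $S$. Any span it cannot interpret confidently becomes a locked residual and is kept verbatim. Extraction failures can therefore depend on the text of $S$, but not on $\mathcal{D}$. I will state the corollary explicitly as conditional on the hypothesis of Prop.~\ref{prop:coverage}, which keeps the scope of the guarantee clear.
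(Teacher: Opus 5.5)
Your proposal is correct and follows essentially the paper's argument: the corollary is immediate from Proposition~\ref{prop:coverage}, because $\atoms_{\mathrm{req}}(S)$ and the feasible set are fixed by the task-free parse of $S$, so coverage holds at every feasible point independently of the length model and of any branch frequencies, and the guarantee is limited to the extracted contract exactly as you state. One step is unnecessary and slightly overstated: a requirement with no duplicate can still be absorbed by a sharing rewrite, since a unique guarded variant may be covered as a delta on a common core, but nothing in your argument depends on this because you only need $a\preceq(\library,\residual)$.
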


This is the key theoretical benefit of evaluation-free compression. A guard,
tool argument, exception, or output field is protected because it belongs to
the parsed contract, not because sampled tasks happen to activate it. The
guarantee is intentionally limited to the extracted contract; uncertain spans
are therefore kept verbatim rather than silently discarded. The objective also yields an efficient continual form. A new patch usually
changes only a small type--scope neighborhood of the current compact contract.
If the patch creates no profitable abstraction that crosses this neighborhood,
all other cost terms remain constant, so local optimization gives the same
update as rerunning the batch objective. When several patches collectively
create new cross-scope reuse, occasional global repacking restores the missed
saving. Thus local updates provide efficiency, while repacking recovers
long-range structure. Formal conditions and proofs are deferred to
Appendix~\ref{app:theory-details}.

\section{SkillZip}
\label{sec:method}

Figure~\ref{fig:method} summarizes the method. One-shot compression converts a skill into a compact contract and then renders it back to text. Zip-on-Write maintains the same contract while the agent evolves, so each new patch is consolidated before it becomes permanent.

\begin{figure*}[t]
  \centering
  \includegraphics[width=.9\textwidth]{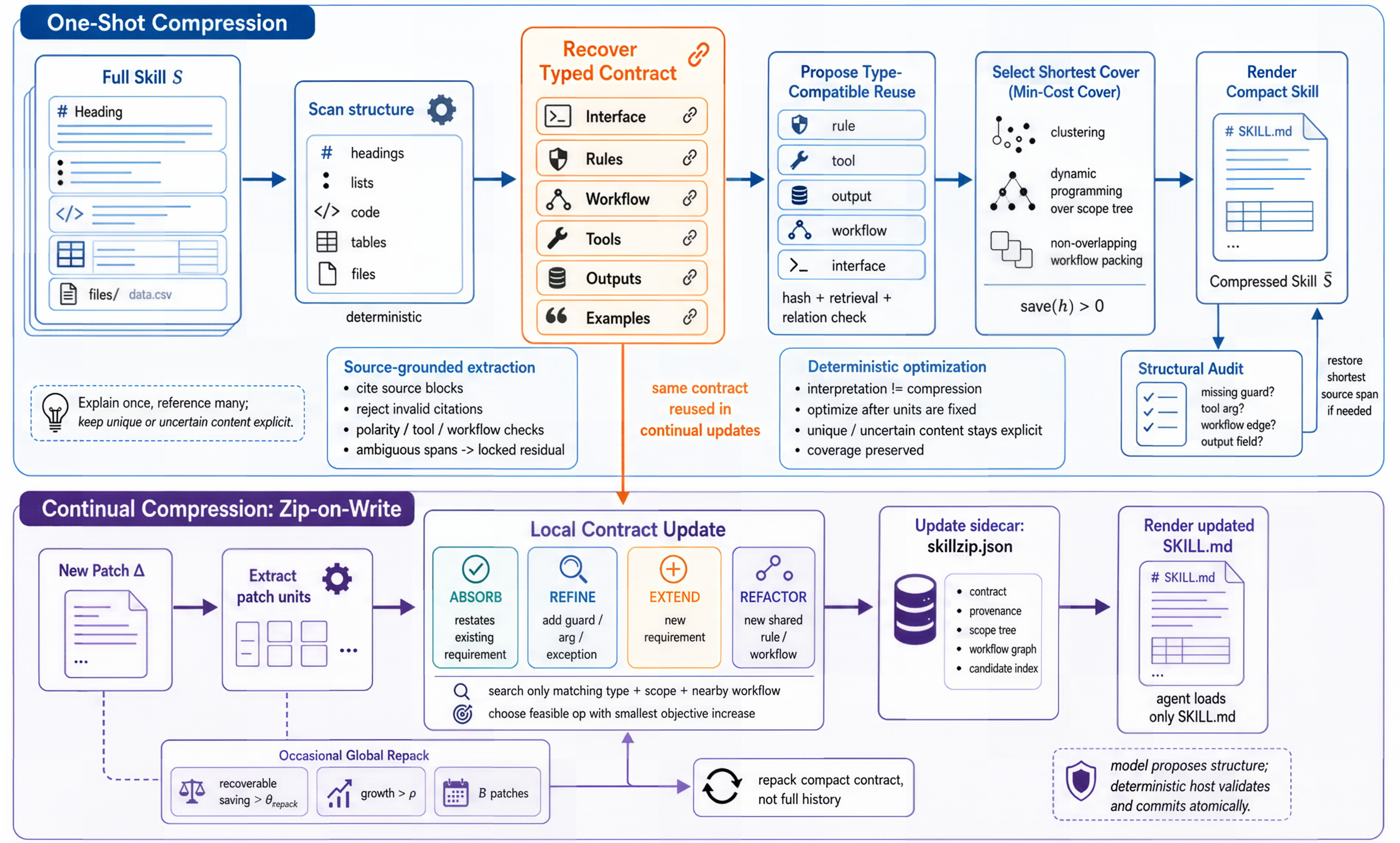}
  \caption{\textbf{Overview of \method.} One-shot compression first recovers the skill contract, then applies the ``explain once, reference many'' principle to repeated rules and workflows, while unique and uncertain content remains explicit. Zip-on-Write compares each patch with the affected contract neighborhood and performs occasional repacking when reuse accumulates across patches.}
  \Description{A two-lane diagram. The one-shot lane maps a long skill to a typed contract, consolidates repeated structure, and renders a compact skill. The continual lane integrates patches through local contract updates and occasional global repacking.}
  \label{fig:method}

\end{figure*}

\subsection{One-Shot Compression}

\paragraph{1. Scan the SKILL.MD before using a model.}
A deterministic scanner parses front matter, headings, nested lists, code blocks, tables, and file references. It copies the skill name and description as interface candidates, converts Markdown nesting into a preliminary scope tree, and records stable source-block identifiers. Numbered lists and temporal markers provide high-confidence workflow hints. This pass reduces the amount of structure that the language model must infer and makes every later compression decision traceable to source skill text.

\paragraph{2. Recover the typed contract once.}
A schema-constrained model receives the numbered blocks and returns the contract in Eq.~\eqref{eq:skill_tuple}: interface entries, workflow nodes and edges, tool calls and required arguments, scoped rules with modality and guard, output fields, and links from examples to the requirements they specify. Every extracted unit must cite source blocks. The host rejects unsupported citations, polarity mismatches, unknown tool names, and invalid workflow references. Ambiguous spans are placed in the locked residual. The extractor is deliberately not asked to compress. Separating interpretation from optimization has two benefits: contract recovery can be evaluated against human annotations, and the optimization is deterministic once the extracted units are fixed.

\paragraph{3. Propose only type-compatible reuse.}
Candidate generation uses hard structural blocking: (a) interface entries compare with the same trigger or exclusion role; (b) rules compare when modality, predicate family, and scope ancestry are compatible; (c) tool units require the same tool and compatible argument signatures; (d) output units compare within the same response type and field namespace; (e) and workflow reuse is proposed from repeated guarded action sequences with identical entry and exit behavior.
Exact matches are found by hashing. Near duplicates are retrieved with an embedding index and verified by a frozen relation checker that predicts equivalence, implication, conflict, or unrelatedness. Conflicts create exception candidates rather than merge candidates.

\paragraph{4. Select the shortest covering explanation.}
Each candidate $h$ receives a saving
\begin{equation}
\operatorname{save}(h)=L(\text{separate form})-L(\text{form using }h).
\end{equation}
Non-positive candidates are discarded. Equivalent units are clustered after conflict filtering. Rule placement is solved by dynamic programming over the scope tree. Repeated workflow candidates are selected by non-overlapping weighted packing, using saving per covered token as the greedy order and pairwise exchange as a refinement. After each selection, the optimizer checks that all source units remain covered. This step is where the running example is simplified. The two branch-local copies of ``never overwrite the input'' are represented by one rule at their common parent because Eq.~\eqref{eq:scope_lift} is satisfied. The validate step is shared only if its definition and calls are shorter than the copies. The JSON example is removed only after its fields are covered by the output contract.

\paragraph{5. Render with fixed templates and audit structurally.}
The renderer produces a normal skill with a concise purpose and triggers, global rules, a numbered workflow, nested guarded branches, explicit tool requirements, and an output checklist or schema. A shared workflow is named only when references save tokens; otherwise it remains inline. Exceptions are placed after their base rule. An optional structural audit reparses only the compressed skill and compares it with the selected contract. If a trigger, guard, workflow edge, tool argument, polarity, or output field is missing, \method restores the shortest source span that covers it and locks that span. 

\begin{algorithm}[t]
\caption{One-Shot \method}
\label{alg:oneshot}
\begin{algorithmic}[1]
\Require Skill $S$ and audit flag $v$
\Ensure Compressed skill $\widetilde S$
\State $B\gets\Call{Scan}{S}$
\State $(\atoms,R)\gets\Call{ExtractContract}{B}$
\State $H\gets\Call{ProposeReuse}{\atoms}$
\State $K\gets\Call{MinCostCover}{\atoms,R,H}$
\State $\widetilde S\gets\Call{Render}{K}$
\If{$v$}
  \State $\widehat K\gets\Call{Parse}{\widetilde S}$
  \State $M\gets\Call{ContractDiff}{K,\widehat K}$
  \State $\widetilde S\gets\Call{RestoreMissingSpans}{\widetilde S,M}$
\EndIf
\State \Return $\widetilde S$
\end{algorithmic}

\end{algorithm}

\subsection{Continual Compression: Zip-on-Write}
A self-evolving agent usually produces a small patch rather than a complete rewrite. \method stores a sidecar, \texttt{skillzip.json}, containing the current contract, source provenance, scope tree, workflow graph, and candidate indices. The rendered \texttt{SKILL.MD} remains the only artifact loaded by the agent.

For every patch unit, the updater compares four interpretations under the same objective:
\begin{itemize}[leftmargin=*,topsep=2pt,itemsep=1pt]
  \item \texttt{ABSORB}: the patch restates an existing requirement and adds no new contract content;
  \item \texttt{REFINE}: the patch adds a guard, tool argument, validation, or explicit exception to an existing unit;
  \item \texttt{EXTEND}: the patch introduces a genuinely new requirement; and
  \item \texttt{REFACTOR}: the patch makes a shared rule or workflow newly worthwhile.
\end{itemize}
The host selects the feasible operation with the smallest increase in Eq.~\eqref{eq:objective}. Candidate search is restricted to the matching type, current scope, ancestor scopes, and adjacent workflow nodes. Thus a patch with $d$ extracted units compares against $O(dk)$ retrieved candidates rather than the complete history.

Local updates may miss reuse that becomes profitable only after several patches. The sidecar therefore tracks approximate counts for rule families and action $n$-grams. A global repack is triggered when the estimated recoverable saving exceeds $\theta_{\mathrm{repack}}$, the contract grows by more than $\rho$ since the last repack, or $B$ patches have arrived. Repacking operates on the compact contract rather than all historical prose.

Algorithm~\ref{alg:ziponwrite} in Appendix~\ref{app:theory-details} summarizes the continual path explicitly. Crucially, $\Delta_t$ is frozen before compression: the evolver decides what knowledge is learned, while \method decides only how that knowledge is represented. \textsc{Absorb} is feasible only when the patch adds no uncovered contract unit; \textsc{Refine} preserves the old unit and records its new guard, argument, validation, or exception; \textsc{Extend} adds a new required unit; and \textsc{Refactor} changes representation without changing coverage. No operation is accepted because it improves a task score.
Appendix~\ref{app:implementation} specifies the schema, cache keys, candidate algorithms, prompts, transaction protocol, and recovery path needed for a reproducible implementation.

\paragraph{Compression as a skill.}
Skill compression can also be packaged as a skill. The model receives the new patch and a retrieved slice of the sidecar, then proposes one of the four operations with source citations. A deterministic host validates the schema, recomputes the saving, enforces coverage, writes a transaction log, and atomically replaces the skill only after rendering succeeds. The model proposes structure but never directly mutates persistent state.

%
%
\providecolor{SkillZipBlue}{RGB}{35,79,119}
\providecolor{SkillZipGreen}{RGB}{45,105,78}
\providecolor{SkillZipPaleBlue}{RGB}{232,240,247}
\providecolor{SkillZipPaleGray}{RGB}{245,246,247}
\providecommand{\resulttakeaway}[1]{%
  \par\smallskip
  \noindent\textcolor{SkillZipBlue}{\textbf{Key result.}}~\textbf{#1}
  \par\smallskip
}

\section{Experiments}
\label{sec:experiments}

We evaluate \method along the two claims that motivate its design:
self-evolution introduces substantially more \emph{surface text} than new
procedural knowledge, and this redundancy can be removed without consulting
downstream evaluations. The section first defines the controlled and
model-backed protocols, then answers five research questions:
\textbf{RQ1} quantifies skill growth during self-evolution;
\textbf{RQ2} studies the compression--fidelity trade-off;
\textbf{RQ3} measures compression cost;
\textbf{RQ4} evaluates the compressed skills' generalizability of \method against the baseline; and
\textbf{RQ5} evaluates continual Zip-on-Write compression.

\subsection{Experimental Setup}
\label{sec:experimental_setup}

\paragraph{Models and Benchmarks.}
We evaluate three agent model backbones:
\textsc{Qwen3.7-Max}, \textsc{Qwen3.6-Plus}, and
\textsc{Kimi K2.6}
~\cite{qwen2026qwen37,qwen2026qwen36,moonshot2026kimik26}.
The two Qwen models represent different capability tiers within the same model
family, while Kimi K2.6 provides a cross-family evaluation. For every
model--benchmark pair, all skill conditions use the same model snapshot, agent
scaffold, system prompt, tool definitions, maximum interaction budget, and
decoding configuration. 
We consider three benchmarks with complementary procedural requirements.
\textbf{BFCL-v4 Web Search}
~\cite{patil2025bfcl,bfcl2025websearch}
evaluates multi-step web retrieval and reasoning through standardized search
and webpage-access tools.
\textbf{LiveMathematicianBench}
~\cite{he2026livemathematicianbench}
contains theorem-grounded multiple-choice problems derived from recent
mathematical research and tests precise reasoning about assumptions,
quantifiers, equivalence relations, and boundary conditions.
\textbf{SpreadsheetBench}
~\cite{ma2024spreadsheetbench}
evaluates spreadsheet manipulation on realistic user requests and workbook
files. 

\paragraph{Skill construction.}
For each model--benchmark pair, we begin with a manually authored
\emph{human skill} that specifies the task procedure and output requirements.
We then apply \textsc{SkillOpt}
~\cite{yang2026skillopt}
to improve this seed skill using the designated evolution split. SkillOpt
iteratively converts execution feedback into bounded edits of the skill
document. The resulting skill is frozen after evolution and used as the common
input to all compression methods. The evolution split is disjoint from the final benchmark test set. Thus, test
instances cannot affect either the knowledge acquired during skill evolution
or the subsequent compression decisions. Note that for one-shot SkillZip, we use a single fixed compressor model (Qwen3.7-max) across all skills, whereas for continual Zip-on-Write the agent's backbone model compresses its own skill. In both modes, the model is invoked only for schema-constrained contract extraction and relation/merge adjudication (greedy decoding, temperature 0); the minimum-cost covering and rendering are deterministic.

\paragraph{Baselines.}
We compare SkillZip with the following five Baselines: (1) \textbf{No Skill}: the backbone model is evaluated without any task-specific skill; (2) \textbf{Human Skill}: the original manually authored seed skill is provided to the model; (3)\textbf{Evolved Skill}: the complete, uncompressed skill produced by SkillOpt is provided to the model; (4)\textbf{SkillReducer}: SkillReducer~\cite{gao2026skillreducer} is applied once to the frozen evolved skill. The uncompressed evolved skill serves as the fidelity reference for the two
compression methods.  SkillReducer is the primary compression baseline because
it is the closest prior method that directly optimizes textual agent skills,
rather than generic prompts or interaction histories. We run SkillReducer
using its native structure-aware pipeline. SkillZip receives
exactly the same evolved skill but does not access benchmark tasks, execution
trajectories, rewards, or behavioral verifiers during compression.

\subsection{RQ1: Skill Growth in Self-Evolution?}

Figure~\ref{fig:skill_growth} shows that skill length increases monotonically
with self-evolution rounds across all benchmarks. By Round~5, the skills reach
approximately $5.6\times$, $3.1\times$, and $6.7\times$ their initial sizes on
BFCL-V4, LiveMath, and SpreadsheetBench, respectively, with an average growth
of about $5.2\times$. The growth persists across domains and is especially pronounced for tasks that
continually accumulate tool-use procedures, failure corrections, and output
constraints. Although each update may be locally useful, repeated rules,
overlapping workflows, and increasingly specific exceptions accumulate without
global consolidation, producing substantial \emph{skill bloat} and increasing
the context cost of every subsequent invocation.


\begin{figure}[htbp]
  \centering
  \includegraphics[width=0.9\columnwidth]{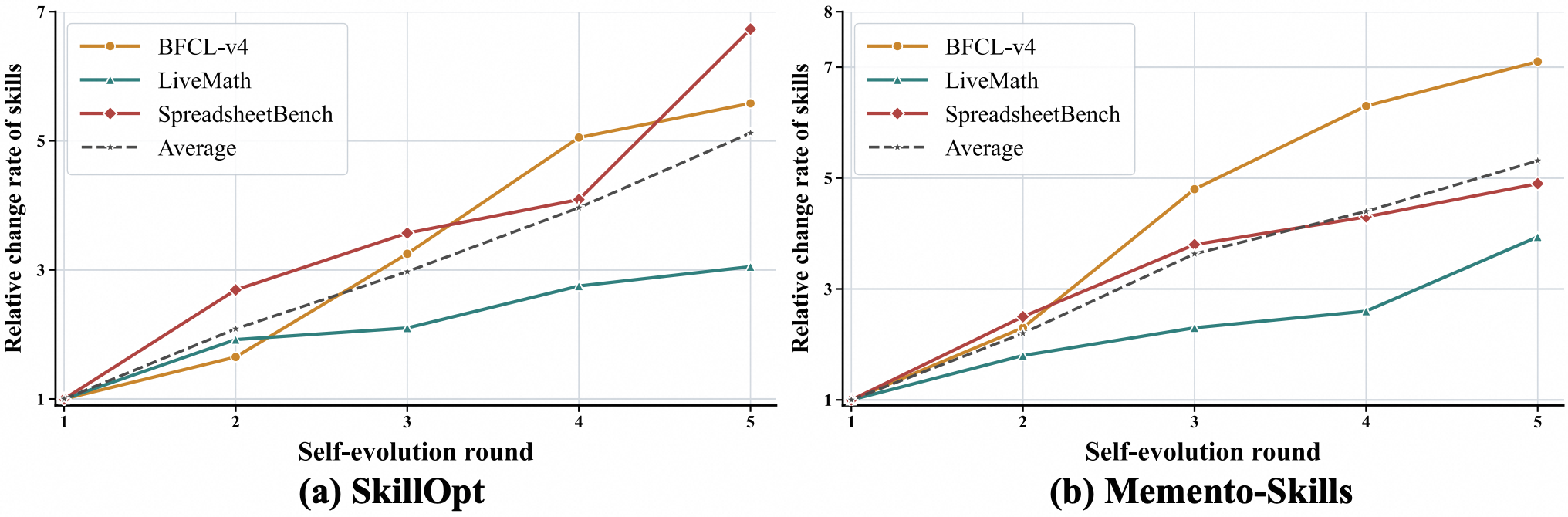}
  \caption{\textbf{The skill growth tendency with respect to agent self-evolving progress for SkillOpt\cite{yang2026skillopt} and Memento-Skills\cite{zhou2026mementoskillsletagentsdesign}.}}
  \Description{A line chart showing skill tokens continuing to grow while unique procedural content rises early and then saturates.}
  \label{fig:skill_growth}
\end{figure}

\begin{takeawaybox}
Self-evolution expands skills to more than
\textbf{$5\times$ their initial length on average}, making compression necessary to prevent procedural knowledge from becoming an
increasing context burden.
\end{takeawaybox}

\subsection{RQ2: Can SkillZip Preserve Skill Fidelity?}
\label{sec:behavioral_results}


Tab.~\ref{tab:cross_model_results} compares the uncompressed SkillOpt-evolved
skills with their compressed ones and other skill settings. The evolved skill outperforms the
no-skill and human-skill conditions in eight of nine settings, confirming that
multi-round evolution generally accumulates useful procedural knowledge.
\method retains this knowledge while achieving compression rates of
27.1\%--36.9\% (\textbf{31.2\% on average}). Its macro-average score is 0.577,
slightly exceeding the uncompressed evolved skill at 0.570, and it matches or
improves the evolved skill in five of nine settings. These results suggest that
consolidating repeated rules, scopes, and workflows can reduce skill length
without systematic behavioral degradation, even though \method uses no tasks,
rollouts, or verifiers during compression.

\begin{table}[htbp]
\centering
\caption{Compressed skill performance and average skill compression rate. The best task performance is shown in \textbf{bold}. Compression rate (C-Rate) is measured relative to the corresponding uncompressed evolved skill.}
\label{tab:cross_model_results}
\scriptsize
\setlength{\tabcolsep}{2.5pt}
\renewcommand{\arraystretch}{1.08}
\resizebox{\columnwidth}{!}{%
\begin{tabular}{cccccc}
\toprule
Model
& Skill Condition
& BFCL-V4 $\uparrow$
& LiveMath $\uparrow$
& Spreadsheet $\uparrow$
& \shortstack{Avg. C-Rate$\uparrow$} \\
\midrule

\multirow{5}{*}{Qwen-3.7-Max}
& No Skill      & 0.833 & 0.425 & 0.432 & -- \\
& Human Skill   & 0.848 & 0.396 & 0.436 & -- \\
& Evolved Skill & \textbf{0.869} & \textbf{0.474} & 0.525 & 0\%  \\
& SkillReducer  & 0.828 & 0.428 & \textbf{0.538} & 10.5\% \\
\rowcolor{SkillZipPaleBlue}
& \textbf{\method} & 0.863 & 0.472 & 0.519 & \textbf{27.1\%} \\
\midrule

\multirow{5}{*}{Qwen-3.6-Plus}
& No Skill      & 0.623 & 0.417 & 0.379 & -- \\
& Human Skill   & 0.641 & 0.405 & 0.457 & -- \\
& Evolved Skill & 0.685 & 0.392 & 0.472 & 0\%  \\
& SkillReducer  & 0.621 & 0.385 & 0.484 & 3.6\% \\
\rowcolor{SkillZipPaleBlue}
& \textbf{\method}
& \textbf{0.694}
& \textbf{0.435}
& \textbf{0.491}
& \textbf{29.7\%} \\
\midrule

\multirow{5}{*}{Kimi-K2.6}
& No Skill      & 0.714 & 0.402 & 0.415 & -- \\
& Human Skill   & 0.708 & 0.362 & 0.473 & -- \\
& Evolved Skill & \textbf{0.772} & 0.433 & 0.506 & 0\% \\
& SkillReducer  & 0.732 & 0.384 & 0.497 & 13.4\% \\
\rowcolor{SkillZipPaleBlue}
& \textbf{\method}
& 0.747
& \textbf{0.457}
& \textbf{0.513}
& \textbf{36.9\%} \\
\bottomrule
\end{tabular}%
}
\end{table}

Compared with SkillReducer~\cite{gao2026skillreducer}, \method achieves both
higher compression (31.2\% vs.\ 9.2\% on average) and higher task performance
(0.577 vs.\ 0.544). 
\emph{This difference
reflects the distinct compression regimes targeted by the two methods. SkillReducer is well suited to an initial debloating and quality-control pass
over heterogeneous public skills, which may contain verbose background,
redundant examples, or content unrelated to execution. }In contrast,
self-evolved skills are typically knowledge-dense because their updates arise
from execution feedback; their main redundancy lies in repeatedly stated
constraints, overlapping scopes, and copied workflows. Consequently,
\textbf{structural consolidation is better aligned with evolved-skill
compression than content filtering or deferral}.

\begin{takeawaybox}
\method compresses evolved skills by \textbf{31.2\% on
average} while preserving or improving their overall performance; SkillReducer
is more naturally positioned as a first-pass debloating and quality-control
method for general public skills.
\end{takeawaybox}

\begin{table}[htbp]
\centering
\caption{Compression overhead of SkillZip and SkillReducer.}
\label{tab:compression-efficiency}
\scriptsize
\setlength{\tabcolsep}{3.0pt}
\renewcommand{\arraystretch}{1.10}
\resizebox{0.95\columnwidth}{!}{%
\begin{tabular}{ccccc}
\toprule
Method
& Dataset
&Average Time $\downarrow$
& \shortstack{LLM Calls $\downarrow$}
& \shortstack{Rollouts $\downarrow$} \\
\midrule

\rowcolor{SkillZipPaleBlue}
\textbf{\method}
& LiveMath
& \textbf{207 s}
& 4
& \textbf{0} \\
SkillReducer
& LiveMath
& 1331 s
& \textbf{3}
& 40 \\
\midrule

\rowcolor{SkillZipPaleBlue}
\textbf{\method}
& Spreadsheet
& \textbf{332 s}
& 5
& \textbf{0} \\
SkillReducer
& Spreadsheet
& 1082 s
& \textbf{3}
& 80 \\
\midrule

\rowcolor{SkillZipPaleBlue}
\textbf{\method}
& BFCL-V4
& \textbf{318 s}
& 8
& \textbf{0} \\
SkillReducer
& BFCL-V4
& 587 s
& \textbf{3}
& 40 \\
\bottomrule
\end{tabular}%
}
\par\vspace{2pt}
\parbox{0.94\columnwidth}{\scriptsize\emph{Note:} ``LLM calls'' counts only calls to the compressor model. SkillReducer additionally consumes 40--80 validation rollouts per compression (each requiring at least one agent call). Because some rollouts hit a warm evaluation cache, the reported time is an optimistic lower bound.}
\end{table}

\subsection{RQ3: Compression Efficiency}
\label{sec:compression-efficiency}
To assess the compression cost of \method compared to SkillReducer, we compute the offline one-shot cost with SkillReducer on all benchmarks under the same execution environment. \method completes
compression faster on all three datasets, reducing average time cost from 1331 to
207 seconds on LiveMath, from 1082 to 332 seconds on Spreadsheet, and from
587 to 318 seconds on BFCL-V4. Averaged across datasets, \method requires
286 seconds,  corresponding to
a \textbf{3.5$\times$ speedup}.

SkillReducer uses fewer direct compression-model calls, but additionally
requires 40--80 task rollouts for candidate validation and repair. In
contrast, \method uses several structured LLM calls but
\textbf{requires no task rollout on any dataset}. The results indicate that
environment interaction, rather than the number of compression calls alone,
dominates the end-to-end cost of evaluation-guided compression. Consequently,
the evaluation-free design of \method substantially reduces latency while
avoiding potential overfitting and dependence on executable tasks and behavioral verifiers.

\begin{takeawaybox}
\method achieves a \textbf{3.5$\times$ average speedup} over SkillReducer while requiring \textbf{zero task rollouts}, highlighting the efficiency.
\end{takeawaybox}

\subsection{RQ4: Cross-Model Generalization}
\label{sec:cross_model_generalization}

\begin{figure}[htbp]
    \centering
    \includegraphics[width=0.95\columnwidth]{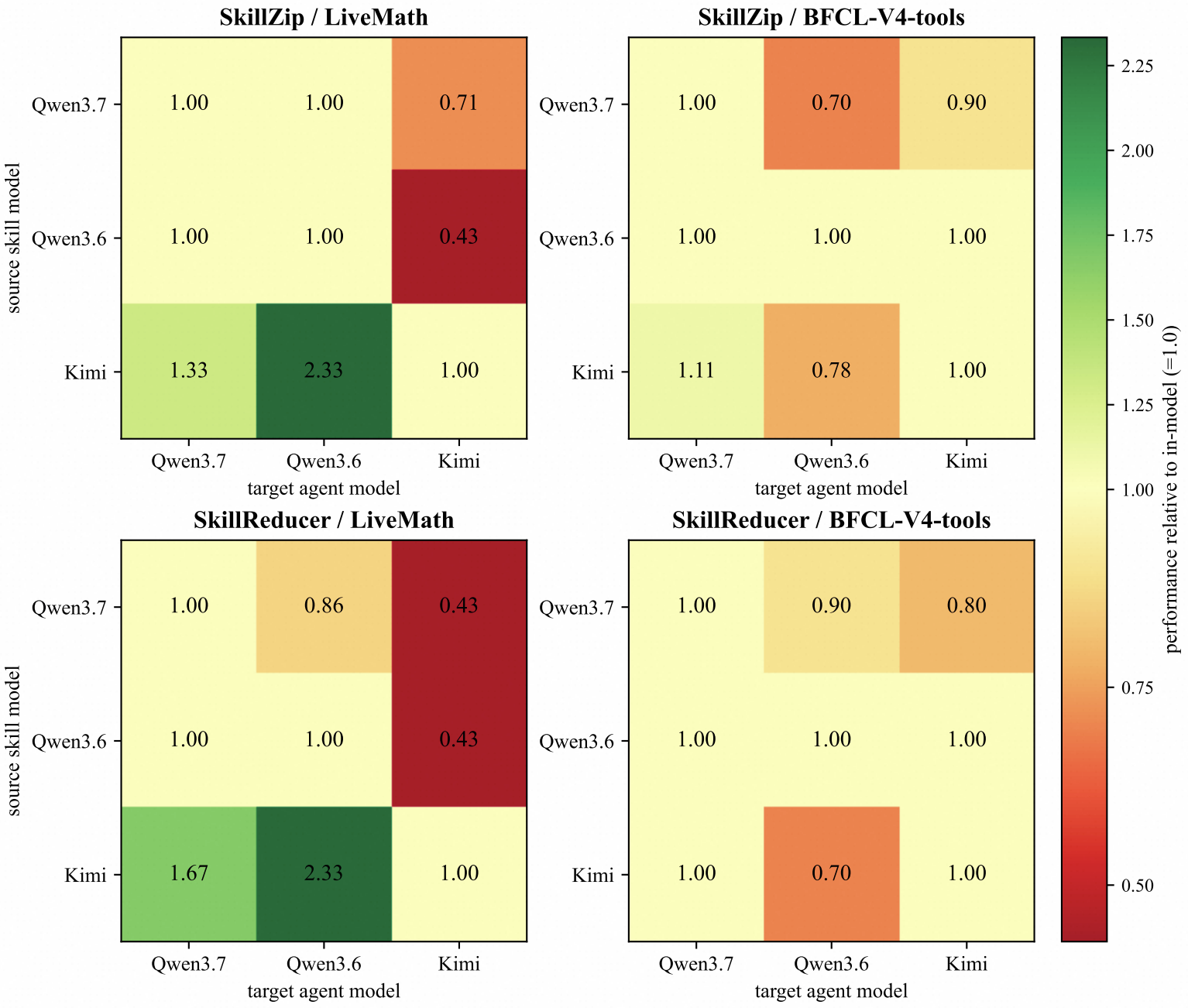}
    \caption{Cross-model generalization of compressed skills on LiveMath and
    BFCL-V4. Rows denote the source model whose evolved skill is
    compressed, and columns denote the target model executing the compressed
    skill. Diagonal cells represent same-model deployment, while off-diagonal
    cells measure cross-model transfer.}
    \label{fig:cross_model_transfer}
\end{figure}

Fig.~\ref{fig:cross_model_transfer} evaluates whether a skill compressed from
one source model can be executed by a different target model. On LiveMath,
\method achieves an overall retention of \textbf{0.97}, compared with 0.91 for
SkillReducer. Since their same-model results are comparable, the improvement
mainly comes from off-diagonal source--target pairs, suggesting that preserving
explicit rules, guards, and output constraints produces a more
model-independent skill representation.


\begin{takeawaybox}
\method transfers compressed skills across agent backbones
without target-specific evaluation, improving retention on LiveMath
 and remaining comparable to SkillReducer on
BFCL-V4.
\end{takeawaybox}

\subsection{RQ5: Continual Zip-on-Write Compression}
\label{subsec:zipwrite}

\begin{figure*}[htbp]
  \centering
  \includegraphics[width=0.95\textwidth]{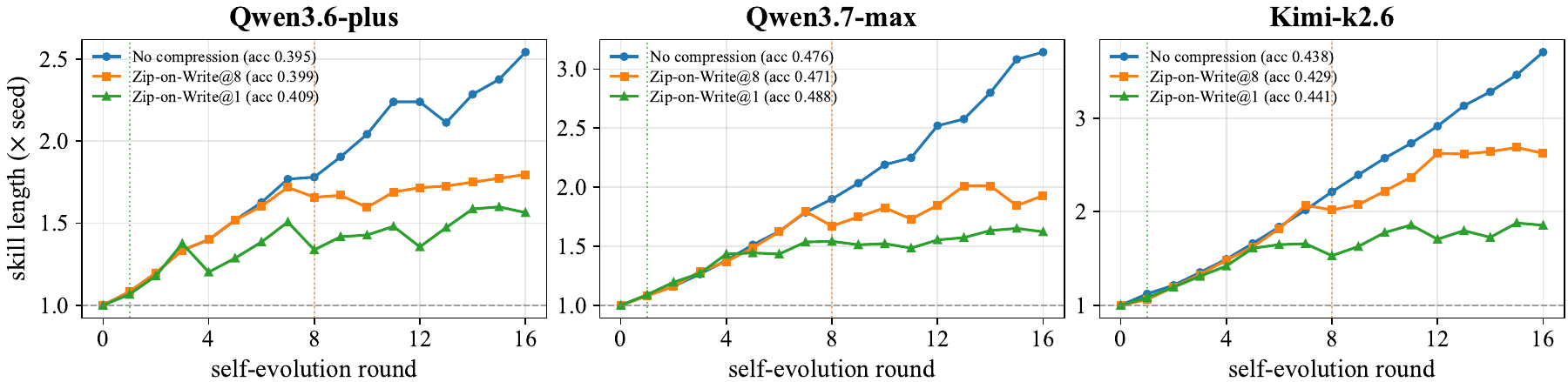}
  \caption{Skill length (by $N\times$) during self-evolution on LiveMath for
  three agent backbones. Each panel compares no compression against
  \textsc{Zip-on-Write} continual compression activated at round~8 and at round~1; legends report the final test accuracy.}
  \label{fig:zipwrite}
\end{figure*}

Fig.~\ref{fig:zipwrite} evaluates \textsc{Zip-on-Write}, the continual mode of
\textsc{SkillZip}, inside a 16-round self-evolution loop on LiveMath with three backbones. Without compression, the SkillOpt evolver exhibits the bloat pathology consistently across all three models: the length of skill grows monotonically to $2.5\times$, $3.1\times$, and $3.7\times$ its seed length, respectively. Activating \textsc{Zip-on-Write} from round~1 \emph{bounds} this growth for the entire trajectory, capping the skill at roughly $1.6\times$--$1.9\times$ across all three models---a $38\%$--$50\%$ reduction relative to the uncompressed endpoint---because each write is immediately absorbed into the typed contract library and periodically repacked.

Two further observations yield the practical guidance. First, activation time
matters: switching compression on only at round~8 recovers part of the
accumulated redundancy but never catches up with the early-activation
trajectory (e.g., $2.6\times$ vs.\ $1.9\times$ on Kimi-k2.6), showing that
redundancy is cheaper to prevent than to remove. Second, compression does not
trade accuracy for compactness: the final held-out test accuracy of the
round-1 configuration matches or slightly exceeds the uncompressed skill on all three backbones. 

\begin{takeawaybox}
Continual compression should be on from the start of evolution---it \emph{keeps the skill within an obviously smaller length factor of the uncompressed one at nearly no accuracy cost}, whereas \emph{a later activation of compression only partially undoes bloat that has already compounded}.
\end{takeawaybox}


\section{Conclusion}
\label{sec:conclusion}

Self-evolving agents need a mechanism for forgetting repetition without forgetting procedure. \method treats a skill as a typed contract and compresses it using a simple principle: explain shared structure once, reference it where needed, and keep genuine differences explicit. The resulting shortest-faithful-explanation objective is evaluation-free, protects rare requirements through hard coverage, and unifies rule sharing, scope placement, workflow reuse, and exception handling. One-shot compression requires one structured extraction call followed by deterministic optimization; Zip-on-Write integrates the same principle into continual skill evolution. The proposed experiments separate structural correctness, held-out behavior, evaluation-set overfitting, and cost, providing a practically reliable skill compression method tailored for self-evolving agents.

\FloatBarrier
\IEEEtriggeratref{32}
\bibliographystyle{IEEEtran}
\bibliography{references}

\clearpage
\onecolumn
\appendices
\section{Additional Theoretical Details}
\label{app:theory-details}

\begin{roadmapbox}
The public appendix records the decision-specific cost tests and proofs behind the shortest-faithful-explanation objective, the implementation and continual-update protocol, the full contract schema and prompts, the experimental protocol, and additional analysis of compression behavior. It is organized as a reproducibility companion to the main paper rather than as a space-constrained supplement.
\end{roadmapbox}

\subsection{Practical Length Model}

The abstract length in Eq.~\eqref{eq:objective} is instantiated as
\begin{equation}
L(x)=|\operatorname{Render}(x)|_{\mathrm{tok}}
+\gamma_{\mathrm{def}}(x)
+\gamma_{\mathrm{ref}}(x)
+\gamma_{\mathrm{scope}}(x),
\label{eq:practical_code}
\end{equation}
where the first term is the rendered token length and the remaining terms
charge for defining a named abstraction, referencing it, and expressing its
scope. These charges prevent degenerate solutions that introduce many tiny
abstractions whose notation costs more than the text they replace.

\subsection{Decision-Specific Cost Tests}

\paragraph{Equivalent requirements.}
Suppose clauses $x_1$ and $x_2$ can be represented by a common typed unit $z$.
Sharing is selected when
\begin{equation}
L(z)+L(x_1\mid z)+L(x_2\mid z)
<
L(x_1)+L(x_2).
\label{eq:merge_test}
\end{equation}
For true paraphrases, the residual terms are nearly empty. A polarity change,
guard difference, tool-argument difference, or output-field difference must be
encoded in the residual and can make sharing unprofitable.

\paragraph{Scope lifting.}
Let rule $c$ occur in child scopes $s_1,\ldots,s_r$. It may be moved to their
closest common ancestor $u$ only if every relevant path from $u$ requires the
rule and all local conflicts are encoded as exceptions. Among feasible
placements, lifting is selected when
\begin{equation}
L(c@u)+rL(\textsf{scope-ref})
<
\sum_{i=1}^{r}L(c@s_i).
\label{eq:scope_lift}
\end{equation}

\paragraph{Workflow reuse.}
Let workflow fragment $q$ occur $r$ non-overlapping times. A shared procedure
is useful when
\begin{equation}
L(\operatorname{def}(q))
+rL(\operatorname{call}(q))
<
rL(q).
\label{eq:motif_threshold}
\end{equation}
The threshold adapts to fragment length, the number of occurrences, and the
definition/call overhead.

\paragraph{Common rule with exceptions.}
Suppose guarded clauses $c_i@g_i$ share a common core $c$. SkillZip compares
\begin{equation}
L(c)+\sum_i L(\delta_i)
\qquad\text{against}\qquad
\sum_i L(c_i@g_i),
\label{eq:exception_code}
\end{equation}
where $\delta_i$ records the guarded difference. The common representation is
selected only when it is shorter and every exception remains attached to the
rule it modifies.

\subsection{Proofs and Additional Guarantees}

\begin{proof}[Proof of Proposition~\ref{prop:coverage}]
Feasibility requires every $a\in\atoms_{\mathrm{req}}(S)$ to be covered.
Coverage may be direct, realized by a scope-safe shared rule, realized by a
shared workflow whose expansion contains the original nodes and edges, or
preserved verbatim in the residual. Removing the only representation of any
required unit violates the constraint in Eq.~\eqref{eq:objective}; therefore,
no feasible solution can discard it.
\end{proof}

\begin{proposition}[Reuse threshold]
\label{prop:reuse}
For a repeated structure $q$, introducing a shared definition is beneficial if
and only if the saved repetition cost exceeds the definition and reference
overhead.
\end{proposition}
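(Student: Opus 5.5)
We compare the two feasible representations that differ only in how the $r$ non-overlapping occurrences of $q$ are written, and show that the objective difference reduces exactly to the inequality in Eq.~\eqref{eq:motif_threshold}.

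\textbf{Step 1: the two candidates.} Let $(\library,\residual)$ be a feasible representation in which all $r$ occurrences of $q$ appear inline. Let $(\library',\residual')$ be obtained by adding $\operatorname{def}(q)$ to the library and replacing each occurrence by $\operatorname{call}(q)$. Both candidates lie in $\mathcal{H}(S)$.

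\textbf{Step 2: the shared representation is still feasible.} By the structural coverage rule of Section~\ref{sec:structure}, a shared procedure whose expansion contains the original nodes and edges covers every unit the inline copies covered. The proof of Proposition~\ref{prop:coverage} already uses this form of coverage. So the constraint in Eq.~\eqref{eq:objective} holds for both candidates, and the comparison reduces to comparing lengths.

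\textbf{Step 3: additivity of the length model.} We use the practical length model, Eq.~\eqref{eq:practical_code}, and assume the rendered token count is additive over non-overlapping rendered segments. The charges $\gamma_{\mathrm{def}}$ and $\gamma_{\mathrm{ref}}$ are then folded into $L(\operatorname{def}(q))$ and $L(\operatorname{call}(q))$. Every term outside the $r$ occurrences is identical in the two candidates and cancels. This gives
\[
\Delta L = \big[L(\library')+L(\residual'\mid\library')\big]-\big[L(\library)+L(\residual\mid\library)\big] = L(\operatorname{def}(q)) + rL(\operatorname{call}(q)) - rL(q).
\]

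\textbf{Step 4: reading off the equivalence.} The shared definition is beneficial, meaning it strictly lowers the objective, if and only if $\Delta L<0$. Rearranged, this says the saved repetition cost $r\big(L(q)-L(\operatorname{call}(q))\big)$ exceeds the definition overhead $L(\operatorname{def}(q))$. Counting the per-call charge inside the reference overhead gives exactly the verbal form of the statement. Both directions follow at once because the comparison is an identity, not a bound.

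\textbf{Main obstacle.} The hard step is justifying the additivity in Step 3. Tokenization at segment boundaries and scope annotations could, in principle, make the cost of the surrounding text depend on whether $q$ is inlined.

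I would handle this by stating the result relative to the length model, in which $L$ is defined additively over rendered blocks, and treating boundary tokenization effects as part of $\gamma_{\mathrm{ref}}$. I would also require that the $r$ occurrences are non-overlapping, which the workflow packing in the method enforces. This is what prevents the cross terms that would otherwise break the exact cancellation.
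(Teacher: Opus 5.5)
Your proof is correct and follows essentially the paper's argument: the paper also just compares the separate cost $rL(q)$ with the shared cost $L(\operatorname{def}(q))+rL(\operatorname{call}(q))$ and reads the equivalence off Eq.~\eqref{eq:motif_threshold}. Your explicit checks that the shared form stays feasible under structural coverage, and that every other term cancels under an additive length model, are assumptions the paper leaves implicit; they make the identity precise without changing the route.
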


\begin{algorithm}[htbp]
\caption{Zip-on-Write Continual Compression}
\label{alg:ziponwrite}
\begin{algorithmic}[1]
\Require State $Z_{t-1}$, accepted patch $\Delta_t$, repack policy $\eta$
\Ensure Updated state $Z_t$ and rendered skill $\widetilde S_t$
\State $B_t\gets\Call{ScanPatch}{\Delta_t}$
\State $(\atoms_t,R_t)\gets\Call{ExtractPatch}{B_t}$
\State $N_t\gets\Call{RetrieveCompatible}{Z_{t-1},\atoms_t}$
\State $H_t\gets\Call{ProposeOps}{\atoms_t,N_t}$
\Comment{absorb/refine/extend/refactor}
\State $Z'\gets\Call{LocalMinCostUpdate}{Z_{t-1},\atoms_t,R_t,H_t}$
\State \Call{AssertCoverage}{$Z',\atoms_t$}
\State \Call{UpdateReuseStatistics}{$Z'$}
\If{$\Call{RepackDue}{Z',\eta}$}
  \State $U'\gets\Call{Units}{Z'}$
  \State $R'\gets\Call{Residuals}{Z'}$
  \State $H\gets\Call{ProposeReuse}{U'}$
  \State $Z'\gets\Call{MinCostCover}{U',R',H}$
\EndIf
\State $\widetilde S_t\gets\Call{Render}{Z'}$
\If{$\Call{AuditDue}{Z'}$}
  \State $(Z',\widetilde S_t)\gets\Call{StructuralAuditAndRestore}{Z',\widetilde S_t}$
\EndIf
\State $Z_t\gets\Call{AtomicCommit}{Z',\widetilde S_t}$
\State \Return $(Z_t,\widetilde S_t)$
\end{algorithmic}
\end{algorithm}

\begin{proof}
The separate form costs $rL(q)$, whereas the shared form costs
$L(\operatorname{def}(q))+rL(\operatorname{call}(q))$. The shared form reduces
the objective exactly when Eq.~\eqref{eq:motif_threshold} holds.
\end{proof}

\begin{proposition}[Local update equivalence]
\label{prop:local_exact}
Let $K_{t-1}$ be the compact contract before patch $\Delta_t$. If the patch
introduces no profitable abstraction whose occurrences span both the affected
and unaffected regions, minimizing Eq.~\eqref{eq:objective} over the affected
type--scope neighborhood yields the same update as rerunning the batch
optimizer on the full contract.
\end{proposition}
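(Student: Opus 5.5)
The plan is to show that the batch objective in Eq.~\eqref{eq:objective} \emph{decomposes additively} once the hypothesis rules out cross-region abstractions. Then minimizing over the affected neighborhood, with the rest held at $K_{t-1}$, is the same as minimizing globally.

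\textbf{Step 1: partition the units.} Let $N$ be the affected type--scope neighborhood of $\Delta_t$ (matching type, current scope, ancestor scopes, adjacent workflow nodes), and let $\bar N$ be its complement. Partition $\atoms_{\mathrm{req}}$ after the patch as $\atoms_N\cup\atoms_{\bar N}$, where $\atoms_{\bar N}$ is unchanged from before the patch. Any candidate $(\library,\residual)\in\mathcal{H}$ is built from library elements, residual spans, and references. Each of these either covers units of only one region or straddles both.

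\textbf{Step 2: decompose the length.} The hypothesis says no profitable abstraction spans both regions. By Prop.~\ref{prop:reuse}, a spanning abstraction that is not profitable can be replaced by its separate (inlined) form without increasing $L$. So we may restrict to an optimum with no spanning element. Under the practical length model of Eq.~\eqref{eq:practical_code}, the rendered token count and the $\gamma_{\mathrm{def}},\gamma_{\mathrm{ref}},\gamma_{\mathrm{scope}}$ charges are sums over rendered elements. Hence
\[
L(\library)+L(\residual\mid\library)=J_N(\library_N,\residual_N)+J_{\bar N}(\library_{\bar N},\residual_{\bar N}).
\]
Coverage, Eq.~\eqref{eq:coverage}, also splits, since each unit is covered within its own region.

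\textbf{Step 3: separate minimization.} A separable objective with separable constraints is minimized by minimizing each part independently. The $\bar N$-subproblem is identical to the pre-patch subproblem. It was already solved by $K_{t-1}$, because $K_{t-1}$ was batch-optimal for the old contract and that problem decomposes the same way. The restricted $\bar N$-component of $K_{t-1}$ is therefore a minimizer of $J_{\bar N}$. The local optimizer fixes that component and minimizes $J_N$. Combining the two gives a global minimizer of the batch problem, so with a deterministic tie-breaking rule the updates coincide.

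\textbf{Main obstacle.} The hard step is Step 2. Additivity requires that the rendered cost of a region not depend on the other region. That fails, for example, when a scope-lift in $N$ moves a rule to an ancestor scope that also contains $\bar N$ rules: rendering order, shared headers, or scope-reference charges could then interact. I would handle this by folding it into the hypothesis. Any element whose definition or scope annotation is shared between regions counts as an abstraction spanning both regions. Since ancestor scopes are included in $N$ by construction, such interactions either fall within $N$ or are excluded by assumption. A secondary point is ties: "the same update" should be read as "an optimal update with the same objective value," or as exact equality under a fixed tie-breaking order shared by both optimizers.
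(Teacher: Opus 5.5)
Your argument is the paper's argument made explicit. The paper simply asserts that, under the hypothesis, every candidate whose feasibility or cost changes lies in the affected neighborhood, so everything outside it contributes the same constant to the local and batch objectives, and this is exactly your separability-plus-independent-minimization step. Your two additions are also implicit in the paper's proof, so the proposal is correct at the same level of rigor: that $K_{t-1}$ must itself be batch-optimal, so its unaffected component already minimizes $J_{\bar N}$; and that cross-region interactions must be read into the hypothesis, including pre-existing shared abstractions that the literal word ``introduces'' does not exclude.
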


\begin{proof}
Under the stated condition, every candidate whose feasibility or cost changes
is contained in the affected neighborhood. All candidates and cost terms
outside that neighborhood are identical before and after the patch and
therefore contribute the same constant to the local and global objectives.
Both optimizations consequently select the same change.
\end{proof}

The condition of Proposition~\ref{prop:local_exact} can fail when several
patches jointly create a repeated workflow or a rule shared across previously
unrelated scopes. This motivates periodic global repacking over the compact
contract. Repacking is not required for correctness---coverage remains
enforced after every patch---but it can recover savings invisible to purely
local updates.

\subsection{Boundary of the Guarantee}

The preservation guarantee is relative to the contract produced by the parser;
it does not establish that arbitrary natural language has been interpreted
perfectly or that two rendered skills induce identical behavior for every
language model. SkillZip makes this boundary explicit through source
provenance, parser confidence, locked residuals, and the optional independent
structural audit. Its intended conservative failure mode is
under-compression, not silent deletion of uncertain requirements.

\section{Implementation Details}
\label{app:implementation}

This appendix specifies an implementation that can be translated directly into code. The intended repository separates parsing, optimization, rendering, and evaluation so the evaluation-free claim can be audited from file access and logs.

\subsection{Repository and Command-Line Interface}
\begin{lstlisting}[style=compactjson]
skillzip/
  scanner.py          # Markdown blocks + stable IDs
  extract.py          # schema-constrained contract parse
  relations.py        # typed retrieval + NLI checks
  workflow.py         # graph and repeated-sequence mining
  optimize.py         # min-cost covering selection
  render.py           # deterministic text templates
  audit.py            # compressed-text contract diff
  online.py           # Zip-on-Write + transaction log
  schemas/skillzip.schema.json
  prompts/{extract,patch,audit}.txt
  configs/{default,efficient}.yaml
\end{lstlisting}

\begin{lstlisting}[style=compactjson]
# one-shot
skillzip compress SKILL.md \
  --config configs/default.yaml \
  --state skillzip.json \
  --output SKILL.compact.md

# one continual update
skillzip update skillzip.json PATCH.md \
  --output SKILL.md

# evaluation-free structural check
skillzip audit SKILL.compact.md skillzip.json

# explain each accepted/rejected abstraction
skillzip inspect skillzip.json --show-savings
\end{lstlisting}
The CLI writes through temporary files and replaces the persistent state only after schema validation, coverage validation, and rendering succeed.

\subsection{Deterministic Scanning and Stable Provenance}
The scanner emits blocks with fields \texttt{id}, \texttt{kind}, \texttt{heading\_path}, \texttt{text}, and \texttt{line\_range}. The identifier is a hash of normalized text and ancestor headings; line numbers are stored separately so unrelated insertions do not invalidate provenance. Code fences and tables remain atomic blocks because splitting them can destroy a template or schema.

A scope path is an array such as
\begin{lstlisting}[style=compactjson]
["root", "workflow", "if-validation-fails"]
\end{lstlisting}
Markdown nesting supplies initial scopes. Explicit guards create child scopes even when no heading is present. Promotion to a wider scope requires cited language such as ``always'', ``for every request'', or structural repetition across all relevant children.

\subsection{Relation Checking Pipeline}
Hard compatibility keys are applied before embeddings:
\begin{equation}
(\textsf{type},\textsf{modality},\textsf{tool/output namespace},\textsf{scope family}).
\end{equation}
Exact normalized units merge by hash. Remaining units are embedded once in a separate index per key. The top-$k$ candidates are passed to a frozen cross-encoder that returns equivalence, left implication, right implication, conflict, or unrelated. Only equivalence and implication create sharing candidates. Conflict creates an exception edge; low-confidence pairs remain separate.

Cache keys include normalized texts, types, scope signatures, model and revision, thresholds, and prompt hashes. This supports deterministic reruns and avoids repeated relation calls in Zip-on-Write.

\subsection{Practical Length Model}
For a rendered unit or abstraction $x$,
\begin{equation}
\widehat L(x)=|\operatorname{Render}(x)|_{\mathrm{tok}}
+\lambda_d n_{\mathrm{def}}(x)
+\lambda_r n_{\mathrm{ref}}(x)
+\lambda_s d(x),
\end{equation}
where $d(x)$ is scope depth. The default $\lambda$ values equal the actual token cost of the corresponding template delimiters. They are not fitted on downstream tasks. Every candidate log stores separate before, definition, reference, exception, residual, and after costs.

\begin{lstlisting}[style=compactjson]
{
  "candidate": "workflow:validate-repair",
  "before_tokens": 94,
  "definition_tokens": 32,
  "reference_tokens": 17,
  "exception_tokens": 3,
  "after_tokens": 52,
  "saving_tokens": 42,
  "covered_units": ["W12", "W13", "W31", "W32"]
}
\end{lstlisting}

\subsection{Scope Optimization and Workflow Packing}
For each normalized rule, a bottom-up dynamic program compares keeping copies in child scopes with placing one copy at an ancestor. A placement is infeasible when a reachable child does not require the rule or contains an unencoded conflict. The DP returns the minimum-cost feasible placement.

Workflow paths are represented by action identifiers that include action type, tool name, required arguments, and guard class. Re-Pair proposes repeated adjacent pairs; prefix/suffix tries identify shared branch segments. A candidate must have at least two non-overlapping occurrences, compatible entry/exit behavior, and positive saving under Eq.~\eqref{eq:motif_threshold}. Candidate selection is a weighted set-packing problem. The default implementation greedily selects saving per covered token, then applies pairwise exchange. An exact integer program is provided for small synthetic instances to measure the approximation gap.

\subsection{Structural Audit and Conservative Recovery}
The audit parser does not see the original skill or expected contract. It independently parses the rendered skill, after which a deterministic diff checks trigger polarity, guards, modality, workflow reachability, tool arguments, and output fields. For each missing element, recovery restores the shortest original source span covering that element and marks it \texttt{locked=true}. Future updates may add related content but cannot delete a locked span without explicit user approval.

\subsection{Atomic Online Updates}
Zip-on-Write uses a write-ahead log:
\begin{enumerate}[leftmargin=*,topsep=2pt,itemsep=1pt]
  \item parse and validate the patch;
  \item record proposed operations and savings;
  \item apply them to a copy of the sidecar;
  \item validate coverage and JSON schema;
  \item render a temporary skill;
  \item atomically replace state and text; and
  \item commit the log entry.
\end{enumerate}
A crash before the final step leaves the previous skill unchanged. Repacking is performed in a separate transaction so failure cannot corrupt patch ingestion.


\section{Contract Schema and Prompts}
\label{app:schema}

\subsection{Core Sidecar Schema}
\begin{lstlisting}[style=compactjson]
{
  "interface": {
    "name": "string",
    "purpose": "string",
    "triggers": [{"text":"string","spans":["B1"]}],
    "exclusions": [{"text":"string","spans":["B2"]}]
  },
  "scopes": [{
    "id": "scope-id",
    "parent": "scope-id|null",
    "guard": "string|true",
    "source_blocks": ["B3"]
  }],
  "workflow": [{
    "id": "W1",
    "scope": "scope-id",
    "kind": "action|decision|loop|fallback|stop",
    "action": "string",
    "tool": "string|null",
    "required_args": ["string"],
    "next": ["W2"],
    "spans": ["B4"],
    "confidence": 0.0
  }],
  "rules": [{
    "id": "C1",
    "scope": "scope-id",
    "modality": "must|must_not|prefer",
    "predicate": "string",
    "guard": "string|true",
    "spans": ["B5"],
    "locked": false,
    "confidence": 0.0
  }],
  "output": {
    "type": "string",
    "fields": [{
      "name":"string",
      "required":true,
      "validation":"string",
      "spans":["B6"]
    }],
    "termination": ["string"]
  },
  "evidence": [{
    "kind": "example|rationale|template|background",
    "supports": ["W1", "C1"],
    "adds_unique_unit": false,
    "source_blocks": ["B7"]
  }],
  "shared_procedures": [{
    "id": "P1",
    "actions": ["W1", "W2"],
    "occurrences": [["W1", "W2"], ["W9", "W10"]],
    "saving_tokens": 24
  }],
  "residual": [{
    "source_block": "B8",
    "reason": "AMBIGUOUS|UNIQUE_EVIDENCE|USER_LOCKED"
  }]
}
\end{lstlisting}

\subsection{One-Shot Extraction Prompt}

\begin{promptbox}[System Prompt]
You are a parser for reusable agent skills. Convert the numbered source blocks into the supplied contract schema. Extract only requirements supported by cited block IDs. Preserve negation, quantifiers, branch guards, action order, tool names, required arguments, error handling, and output fields. Do not compress or summarize. An example adds a unique unit only when it is the sole source of a branch, command, constraint, field, value restriction, or formatting rule. If type or scope is uncertain, place the source block in residual with reason AMBIGUOUS. Return JSON only.
\end{promptbox}




The host rejects unknown source IDs, uncited units, dangling workflow edges, output fields without evidence, and required arguments not present in the cited block.

\subsection{Patch Prompt}

\begin{promptbox}[System Prompt]
Parse the proposed skill patch into the same contract schema. For each unit, identify the narrowest supported scope and cite patch block IDs. Compare only with the supplied existing units. Propose one operation: ABSORB, REFINE, EXTEND, or REFACTOR. Preserve every conflict as an explicit guarded exception. Do not edit unrelated scopes. Return JSON only; the host recomputes all savings and decides whether to apply the operation.
\end{promptbox}



\subsection{Independent Audit Prompt}
\begin{promptbox}[System Prompt]
Parse this rendered skill into the contract schema. Do not use the original skill or an expected answer. Preserve exact triggers, exclusions, guards, modalities, tool arguments, workflow edges, required output fields, and termination conditions. Return JSON only.
\end{promptbox}



\subsection{Prior and Post-Compressed Skill Example}
The example shows a LiveMath skill after SkillOpt-based self-evolution
for qwen3.6-plus; one-shot \textsc{SkillZip} compresses it from
\textbf{936 to 638 tokens (32\% saving)} \textbf{with nearly perfect preservation of performance}. Over self-evolution rounds, the SkillOpt evolver repeatedly re-appended
some overlapping guidance, making the skill carry \textit{redundancy}: the
``output only the single valid value / discard extraneous roots'' rule and the
``verify each intermediate product and sum'' rule each appear \textbf{twice},
and two near-identical ``verify each candidate against all constraints'' rules
coexist. \textsc{SkillZip} \textit{removes these duplicates, folds the two
near-paraphrases into one, and normalizes the free-text \texttt{Approach} into a
compact numbered \texttt{Workflow}}, while \textbf{preserving every distinct
output contract verbatim in meaning}---the \texttt{\textbackslash boxed\{\}}
format, set/expression notation, ascending-sort, factorial-magnitude, and
extraneous-root-discarding requirements all survive. Because compression
\textit{only re-expresses or de-duplicates content already covered by the
extracted contract}, the result is \textbf{strictly shorter yet performance
equivalent}.

The following is the uncompressed skill.
\begin{lstlisting}[style=compactjson]
## Name
Competition-style math solver.

## Description
Solve competition-style mathematics problems accurately and return the final
answer in the exact required format.

## When to use
- Any single-answer math problem: algebra, number theory, combinatorics,
  geometry, calculus, probability, or discrete math.

## Approach
1. Restate the problem in your own words; list every given quantity and the
   exact quantity to find.
2. Classify the problem and choose a suitable method (direct computation, algebraic manipulation, casework, invariants, symmetry, recursion, or a known theorem) and briefly justify the choice.
3. Introduce explicit notation for all unknowns and constraints before computing.
4. Solve step by step, keeping exact values (fractions, radicals, symbolic
   constants such as pi or e) rather than rounding early.
5. Verify the result: substitute back into the original relations, sanity-check
   magnitude and units, and confirm every constraint and edge case holds.

## Rules
- Never round intermediate results unless the problem explicitly asks for a
  decimal approximation.
- Reduce fractions to lowest terms and rationalize denominators where standard.
- For counting problems, decide ordered vs unordered and with vs without
  replacement before summing, and enumerate disjoint cases exhaustively.
- State any assumption you make when the problem is ambiguous, then proceed with the most standard interpretation.

## Output
- Show concise working, then put the final answer on its own line as \boxed{...}.
- Give the answer in simplest exact form (e.g., 1/2 not 0.5) unless a decimal is explicitly requested.
- For a set answer use set notation \{...}; for an expression, output the
  simplified expression itself rather than a numeric approximation.
- If the problem requests multiple answers, list all distinct values separated by commas inside a single \boxed{} command (e.g., \boxed{a, b}).
- If the problem asks for a single specific value (e.g., maximum, minimum, unique solution) or implies a unique answer, output ONLY that single valid value. Strictly discard any extraneous candidates, negative roots, or intermediate results that do not satisfy all constraints (such as positivity or domain restrictions); do not list them in the final \boxed{}.
- For problems requesting multiple answers, strictly verify that each listed value satisfies all original problem constraints and definitions; discard any extraneous roots or candidates that fail validation. When listing valid multiple answers, strictly sort them in ascending numerical order before placing them in \boxed{}.
- For problems involving factorials, permutations, or large integer products, explicitly compute the magnitude or leave in factorial notation; never default to 1 or small integers unless rigorously derived.
- For multi-step arithmetic or large number calculations, explicitly verify each intermediate product and sum to prevent off-by-one or digit errors.
- If the problem asks for a single specific value (e.g., maximum, minimum, unique solution) or implies a unique answer, output ONLY that single valid value. Strictly discard any extraneous candidates, negative roots, or intermediate results that do not satisfy all constraints (such as positivity or domain restrictions); do not list them in the final \boxed{}.
- For problems with multiple valid answers, strictly verify each candidate against all problem constraints (e.g., positivity, integer requirements, domain definitions) and discard any that fail validation. If a specific subset of solutions is requested (e.g., positive integers), exclude all others.
- For multi-step arithmetic or large number calculations, explicitly verify each intermediate product and sum to prevent off-by-one or digit errors.
\end{lstlisting}

The following is the corresponding compressed skill by \method.
\begin{lstlisting}[style=compactjson]
## Name
Competition-style math solver.

## Description
Solve competition math accurately; return exact required format.

## When to use
- Any single-answer math problem: algebra, number theory, combinatorics, geometry, calculus, probability, discrete math.

## Rules
- Never round intermediate results unless problem explicitly asks decimal approximation.
- Reduce fractions lowest terms; rationalize denominators where standard.
- For counting problems, decide ordered vs unordered and with vs without replacement before summing; enumerate disjoint cases exhaustively.
- State any assumption you make when the problem is ambiguous, then proceed with the most standard interpretation.

## Workflow
1. Restate problem; list every given and exact quantity to find.
2. Classify problem; choose suitable method (direct computation, algebraic manipulation, casework, invariants, symmetry, recursion, known theorem); justify.
3. Introduce explicit notation all unknowns and constraints before computing.
4. Solve step by step; keep exact values (fractions, radicals, symbolic constants such as pi or e) not rounding early.
5. Verify result: substitute original relations, sanity-check magnitude and units, confirm every constraint and edge case holds.

## Output
- Show concise working; put final answer own line as \boxed{...}.
- Give simplest exact form (e.g., 1/2 not 0.5) unless decimal explicitly requested.
- Set answer use \{...\}; expression output simplified expression itself not numeric approximation.
- If problem requests multiple answers, list all distinct values separated commas inside single \boxed{} command (e.g., \boxed{a, b}).
- When problem asks single specific value or implies unique answer, output ONLY that single valid value; strictly discard extraneous candidates, negative roots, intermediate results not satisfying all constraints (e.g., positivity, domain restrictions); do not list final \boxed{}.
- When problems involve factorials, permutations, large integer products, explicitly compute magnitude or leave factorial notation; never default 1 or small integers unless rigorously derived.
- When multi-step arithmetic or large number calculations, explicitly verify each intermediate product and sum.
- Strictly verify each listed value satisfies all original constraints and definitions (e.g., positivity, integer requirements, domain definitions); discard extraneous roots or failing candidates; if specific subset requested (e.g., positive integers), exclude others. When listing valid multiple answers, strictly sort ascending numerically before placing in \boxed{}.
\end{lstlisting}

\section{Additional Theory and Analysis}
\label{app:analysis}

\subsection{Why the Objective Is More Than Semantic Deduplication}
Semantic deduplication answers whether two clauses have similar meanings. The shortest-explanation objective asks a different question: after paying for a shared definition, references, scope notation, and exceptions, is the shared representation actually shorter? It can reject a plausible merge when exception encoding is verbose, and it can accept a workflow abstraction whose occurrences are far apart in the document but share the same guarded action pattern. It also compares alternative abstractions that compete for the same source units.

\subsection{Optimization Decomposition}
For a fixed candidate set, the optimizer separates into several subproblems. Equivalent-unit clustering uses union--find after conflict filtering. Rule placement is a tree dynamic program. Workflow selection is weighted set packing because overlapping procedures cannot both replace the same action occurrence. The implementation uses greedy selection with pairwise exchange and provides an exact integer program for benchmark instances. The exact solver measures the approximation gap but is not used in the main efficient configuration.


\subsection{Boundary of the Guarantee}
Proposition~\ref{prop:coverage} protects the contract produced by the parser; it does not prove that a language model has perfectly interpreted arbitrary natural language. \method makes this limitation visible through cited source blocks, parser confidence, locked residuals, independent structural audit, and per-type parser evaluation. The intended failure mode is to retain too much text, not to silently delete an uncertain rule.

\subsection{Idempotence Under a Stable Parse}
For fixed parser output, candidate set, length model, renderer, and deterministic tie breaking, let $\mathsf{Zip}(S)$ be the selected rendered skill. If rendering and reparsing recover the same covered contract, a second pass introduces no new candidate that was absent from the first minimization. Therefore $\mathsf{Zip}(\mathsf{Zip}(S))=\mathsf{Zip}(S)$. This property will be tested empirically by applying one-shot compression twice and reporting token and contract differences.

\end{document}